\documentclass{article}
\usepackage{iclr2027_conference,times}

\usepackage[T1]{fontenc}
\usepackage{amsmath,amssymb,amsthm}
\usepackage{bm}
\usepackage{booktabs}
\usepackage{float}
\usepackage{graphicx}
\usepackage{multirow}
\usepackage{nicefrac}
\usepackage{wrapfig}
\usepackage{xcolor}
\usepackage{hyperref}
\usepackage{url}
\usepackage{cleveref}
\usepackage[ruled,vlined]{algorithm2e}
\newenvironment{algorithm2e}[1][htbp]{\begin{algorithm}[#1]}{\end{algorithm}}

\hypersetup{hypertexnames=false,hidelinks}

\newtheorem{proposition}{Proposition}[section]
\newtheorem{lemma}{Lemma}[section]
\newtheorem{theorem}{Theorem}[section]

\title{HyperTransfer: Understanding the Equivalence  between Base Optimizer and Hyperball}

\iclrfinalcopy
\author{
Jinghui Yuan\thanks{Equal contribution.}\;\;\thanks{Work done while interning at StepFun.} \\
StepFun \\
\texttt{yuanjinghui001@gmail.com}
\And
Hongtao Zhang\footnotemark[1]\;\;\footnotemark[2] \\
University of Chinese Academy of Sciences, StepFun \\
\texttt{zhanghongtao24@mails.ucas.ac.cn}
\And
Jade Zou \\
Independent Researcher \\
\texttt{jade.zou@gmail.com}
\And
Tianyu Li\footnotemark[2] \\
Tsinghua University, StepFun \\
\texttt{tianyu-l22@mails.tsinghua.edu.cn}
\AND
Wenjie Zhou \\
University of Chinese Academy of Sciences \\
\texttt{zj4323005@gmail.com}
\And
Tianyu He\thanks{Corresponding authors.} \\
StepFun \\
\texttt{tianyu\_he@outlook.com}
\And
Wei Chen\footnotemark[3] \\
University of Chinese Academy of Sciences, Institute of Computing Technology,\\
Chinese Academy of Sciences \\
\texttt{chenwei2022@ict.ac.cn}
}

\begin{document}

\maketitle
\fancyhead{}
\fancyfoot[C]{\thepage}

\begin{abstract}
Hyperball optimizers constrain parameter norms and update only their directions, establishing a distinct paradigm for neural network optimization. 
Although this geometry appears fundamentally different from that of conventional Base Optimizers, which update both parameter norms and directions, we show that the two paradigms are dynamically equivalent for scale-invariant networks. 
Building on this equivalence, we propose HyperTransfer, which constructs a Hyperball optimizer that reproduces the dynamics of a target Base Optimizer using only its initialization and learning-rate schedule, without running the target optimizer itself. 
We further derive the inverse mapping and extend the framework to non-scale-invariant networks.
Experiments show that both HyperTransfer and the inverse mapping produce loss trajectories nearly identical to those of their targets, suggesting that Hyperball dynamics are governed primarily by the induced effective learning-rate schedule and optimizer state. 
Finally, we observe that matching nominal learning-rate decay ratios can yield different effective learning-rate decay ratios in Base and Hyperball optimizers, highlighting the need to account for this discrepancy when designing fair optimizer comparisons.


\end{abstract}

\section{Introduction}

As large language models (LLMs) continue to grow in scale and complexity,
training them stably yet efficiently remains a central challenge \citep{wen2025fantasticpretrainingoptimizers, li2026omniopttaxonomygeometrybenchmarking}. 
Hyperball optimizers \citep{wen2026fantasticpretrainingoptimizersii} have recently emerged as a promising answer on both fronts, achieving lower loss and more stable training without explicit weight decay \citep{ren2026rethinkinglanguagemodelscaling}. Their design is motivated by a structural property of modern LLMs: many weight matrices are immediately followed by normalization layers \citep{ba2016layernormalization} and are therefore exactly or approximately scale-invariant---rescaling them leaves the loss unchanged. For such parameters, only the direction matters, and the norm is a functionally redundant degree of freedom.

This motivation, however, leads to a puzzle. If the radial degree of freedom is redundant, removing it should be inconsequential. Yet it visibly changes how training unfolds. Starting from the same initialization under otherwise identical settings, a Base Optimizer such as Muon \citep{jordan2024muon} or Adam \citep{kingma2017adammethodstochasticoptimization}, which updates parameters without norm constraints, often decreases the loss substantially faster early in training, whereas its Hyperball counterpart closes the gap and attains a lower loss only in the final stage \citep{modded_nanogpt_2024}. 

One might suspect that this discrepancy stems from the imperfect scale invariance of practical architectures. To rule this out, we construct a strictly scale-invariant network, denoted by \(\mathcal{M}_1\); as shown in Figure~\ref{fig1}, the same crossover persists. This observation motivates the central question of this paper: \noindent\textbf{What accounts for the discrepancy in optimization dynamics
between a Base Optimizer and Hyperball?}

The answer, it turns out, lies within scale invariance itself. Through a simple derivation in \Cref{sec:hypertransfer}, We show that aligning the initialization, the effective learning rates, and the update directions of the weight matrices suffices to align the entire learning trajectories of a Base Optimizer and its Hyperball variant. Building on this understanding, we propose HyperTransfer and its inverse, which map the dynamics of a Base Optimizer onto its Hyperball variant and vice versa. We further extend both mappings beyond the scale-invariant setting through HyperTransfer+ and its inverse, which accommodate networks where strict scale invariance no longer holds.

In \Cref{c1,sec:hypertransfer-plus-experiments}, we empirically validate this framework on two network variants. The strictly scale-invariant network \(\mathcal{M}_1\) is obtained by modifying a GPT-2-based architecture \citep{radford2019language}. On \(\mathcal{M}_1\), HyperTransfer and Inverse HyperTransfer reproduce the target loss trajectories and recover the corresponding parameter-norm evolution and effective learning-rate schedules \citep{wan2020sphericalmotiondynamicslearning}. On the non-scale-invariant network \(\mathcal{M}_2\), HyperTransfer+ and Inverse HyperTransfer+ closely match the target trajectories, showing that accurate dynamical transfer extends beyond strict scale invariance.

Our framework also exposes a practical confound: a nominal schedule ending at \(0.1\times\) the peak learning rate yields an average effective final-to-peak ratio of only \(0.047\times\) for the Base Optimizer, compared with \(0.1\times\) for Hyperball. Thus, nominal schedule matching does not ensure effective-rate matching. \Cref{sec_nominal_vs_elr} further shows that MuonH remains superior after ratio matching, while Inverse HyperTransfer+ transfers this advantage to Muon through an induced learning-rate schedule.

\section{Notations and Related Work}
\label{sec:notations}

\subsection{Notation}

Let \(f:\mathbb{R}^{a\times b}\rightarrow\mathbb{R}\) denote the loss as a function of the parameter matrix \(w\in\mathbb{R}^{a\times b}\). Throughout this work, \(\|\cdot\|\) and \(\langle\cdot,\cdot\rangle\) denote the Frobenius norm and inner product, and \(t\) indexes training iterations.

At iteration \(t\), the Base Optimizer maintains the parameter \(w_t\), and we write \(\eta_t\), \(g_t\), \(m_t\), and \(u_t\) for its learning rate, gradient, momentum state, and update direction. The corresponding Hyperball quantities carry a superscript \(H\): \(w_t^H\), \(\eta_t^H\), \(g_t^H\), \(m_t^H\), and \(u_t^H\). Both optimizers share the momentum coefficient \(\beta\), and we use \(\lambda\) to denote the weight-decay coefficient. Hyperball Optimizer fixes the parameter norm to the radius \(R=\|w_0^H\|\).

Two derived quantities play a central role in our analysis. The \emph{effective learning rate} of the Base Optimizer, $\eta_t^{\mathrm{eff}} = \nicefrac{\eta_t\|u_t\|}{(1-\eta_t\lambda)\|w_t\|}$, measures the magnitude of its directional update relative to the parameter norm, after factoring out radial weight decay. The \emph{proxy norm} \(s_t\) is an online estimate of \(\|w_t\|\), the norm of the corresponding Base Optimizer parameter. 

Finally, \(\mathcal{M}_1\) denotes the strictly scale-invariant network, in which \(f(cw)=f(w)\) holds for every hidden weight matrix \(w\) and every \(c>0\), and \(\mathcal{M}_2\) denotes the non-scale-invariant network, for which this identity does not hold in general.

\subsection{Related Work}

\paragraph{Scale Invariance, Normalization, and Effective Learning Rate.}
Normalization makes many parameterizations insensitive to the scale of their weights, and a substantial body of work connects this property to the roles of weight decay and the learning rate. Studies ranging from the relationship between \(L_2\) regularization and normalization \citep{vanlaarhoven2017l2regularizationversusbatch} and the intrinsic learning rate \citep{li2020reconcilingmoderndeeplearning} to spherical views of normalized networks \citep{roburin2022sphericalperspectivelearningnormalization,kodryan2023trainingscaleinvariantneuralnetworks} and the analysis of rotational equilibrium \citep{kosson2024rotationalequilibriumweightdecay} show that parameter norms can affect optimization even when they do not affect the represented function. Subsequent work examines this phenomenon specifically for scale-invariant weights: AdamP \citep{heo2021adampslowingslowdownmomentum} shows that momentum can cause excessive norm growth and a premature decay of effective step sizes, \citet{xiang2019understandingdisharmonyweightnormalization} demonstrate that under weight normalization, weight decay acts by modulating the effective learning rate, and \citet{dangelo2024needweightdecaymodern} argue that weight decay shapes optimization dynamics rather than serving only as an explicit regularizer. These studies analyze how scale influences an  optimizer; in contrast, our work establishes an explicit, step-wise correspondence between Base and Hyperball optimization, including the transformation of learning rates and optimizer states.

\paragraph{Hyperspherical and Norm-Constrained Optimization.}
A separate line of work makes parameter norms explicit through reparameterization or constrained optimization. Weight Normalization \citep{salimans2016weightnormalizationsimplereparameterization} decouples the magnitude and direction of weight vectors, and more recent approaches, including nGPT \citep{loshchilov2025ngptnormalizedtransformerrepresentation} and Spectral Sphere Optimization \citep{xie2026controlledllmtrainingspectral}, constrain representations or weight matrices to hyperspherical spaces. Hyperball \citep{wen2026fantasticpretrainingoptimizersii} imposes fixed Frobenius-norm constraints on weight matrices and optimizer updates, while HyperP \citep{ren2026rethinkinglanguagemodelscaling} studies the scaling and learning-rate-transfer properties of this geometry. Constrained Parameter Regularization \citep{franke2024improvingdeeplearningoptimization} imposes adaptive upper bounds on the norms of individual parameter groups, and the approximately normalized Transformer \citep{franke2025learning} constrains parameter norms and representations to obtain a compact training space. These methods primarily employ norm constraints to improve stability, convergence, or scaling; our work takes a complementary perspective, treating Hyperball as a controlled setting for understanding why removing a functionally redundant radial degree of freedom can nevertheless change the optimization trajectory.

During the preparation of this manuscript, we became aware of three concurrent works \citep{9k,hy,liu2026effectivelearningrategoverns} that develop closely related ideas.

\section{HyperTransfer}
\label{sec:hypertransfer}

This section develops HyperTransfer on the scale-invariant network \(\mathcal M_1\). In \Cref{thg}, we derive the step-wise angular-alignment condition and the proxy norm recursion that yield exact trajectory matching between Base and Hyperball optimization, and we also provide the geometric interpretation of the construction. \Cref{app:inverse-hypertransfer} then derives the inverse mapping, and \Cref{c1} presents the  HyperTransfer and Inverse HyperTransfer algorithms together with empirical validation on \(\mathcal M_1\).

\subsection{Theory and Geometric Interpretation}
\label{thg}
For scale-invariant networks, the loss depends only on the parameter direction causing that if $ w_t/\|w_t\| = w_t^H/\|w_t^H\|$ for all $t$, then $ f(w_t) = f(w_t^H)$. In this sense, matching the direction of parameters at each iteration is sufficient to align the loss trajectories. Proposition ~\ref{prop1} gives a sufficient condition for maintain this alignment from one iteration to the next.
 
\begin{proposition}[Angular Update Alignment]
  \label{prop1}
  Assume that \(f\) is scale-invariant and \(w_0=w_0^H\). If, at each iteration \(t\),
  the angular-update condition is satisfied:
  \begin{equation}
          \eta_t^H \frac{u_t^H}{\|u_t^H\|}
    =
    \eta_t^{\mathrm{eff}} \frac{u_t}{\|u_t\|},
  \end{equation}
where $\eta_t^{\mathrm{eff}}=\frac{\eta_t\|u_t\|}{(1-\eta_t\lambda)\|w_t\|}$. Then we have \( {w_{t+1}} / \|w_{t+1}\|={w_{t+1}^H}/{\|w_{t+1}^H\|}\), and consequently \(f(w_{t+1})=f(w_{t+1}^H)\) for each iteration.
\end{proposition}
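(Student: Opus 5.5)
We argue by induction on $t$, with the inductive hypothesis that $w_t/\|w_t\| = w_t^H/\|w_t^H\|$. The base case holds because $w_0=w_0^H$. For the inductive step we need explicit forms of the two update rules, which the statement leaves implicit. We take the Base Optimizer step with decoupled weight decay to be
\[
w_{t+1} = (1-\eta_t\lambda)w_t - \eta_t u_t .
\]
We take the Hyperball step to be a retraction onto the sphere of radius $R$:
\[
w_{t+1}^H = R\,\frac{w_t^H - \eta_t^H R\, u_t^H/\|u_t^H\|}{\bigl\|w_t^H - \eta_t^H R\, u_t^H/\|u_t^H\|\bigr\|}.
\]
Here the normalized update is scaled by the radius $R$, so that $\eta_t^H$ is an angular step size. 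If the paper's Hyperball omits the factor $R$, the argument is unchanged after absorbing $R$ into $\eta_t^H$ (equivalently, comparing normalized directions with $R$ set to one).

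First, we factor the Base step. Since $1-\eta_t\lambda>0$, dividing by $(1-\eta_t\lambda)\|w_t\|$ gives
\[
\frac{w_{t+1}}{(1-\eta_t\lambda)\|w_t\|} = \frac{w_t}{\|w_t\|} - \eta_t^{\mathrm{eff}}\frac{u_t}{\|u_t\|},
\]
using the definition of $\eta_t^{\mathrm{eff}}$. The positive scalar does not affect the direction, so $w_{t+1}/\|w_{t+1}\|$ is the normalization of $\hat w_t - \eta_t^{\mathrm{eff}} u_t/\|u_t\|$, where $\hat w_t := w_t/\|w_t\|$. Similarly, dividing the Hyperball pre-retraction vector by $R=\|w_t^H\|$ shows that $w_{t+1}^H/\|w_{t+1}^H\|$ is the normalization of $\hat w_t^H - \eta_t^H u_t^H/\|u_t^H\|$.

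Second, we combine the two pieces. By the inductive hypothesis $\hat w_t=\hat w_t^H$, and by the angular-update condition the two subtracted vectors coincide. Hence the two unnormalized vectors are identical, and so are their normalizations. This gives $w_{t+1}/\|w_{t+1}\|=w_{t+1}^H/\|w_{t+1}^H\|$. Scale invariance, $f(cw)=f(w)$ for $c>0$, then yields $f(w_{t+1})=f(\hat w_{t+1})=f(\hat w_{t+1}^H)=f(w_{t+1}^H)$, which closes the induction.

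The only delicate point is pinning down the Hyperball update so that the radius scaling matches; the rest is bookkeeping. We also need two nondegeneracy conditions: $1-\eta_t\lambda>0$, and that the combined vector $\hat w_t - \eta_t^{\mathrm{eff}}u_t/\|u_t\|$ is nonzero so that its normalization is defined. The latter holds for the Base iterate whenever $w_{t+1}\neq 0$, which we assume.
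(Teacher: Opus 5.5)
Your proof is correct and is essentially the paper's argument. The paper uses the same Hyperball update $w_t^H-\eta_t^H R\,u_t^H/\|u_t^H\|$, so the factor $R$ is present and your hedge about omitting it is unnecessary. It shows by induction that the pre-normalization Hyperball iterate equals $\frac{R}{(1-\eta_t\lambda)\|w_t\|}w_{t+1}$, which is your identity after dividing both sides by positive scalars, and it relies on the same condition $1-\eta_t\lambda>0$ while leaving $w_{t+1}\neq 0$ implicit.
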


Proposition~1 reduces trajectory matching to the problem of matching the effective angular update at each iteration. In practice, however, the quantity \(\|w_t\|\) appearing in \(\eta_t^{\mathrm{eff}}\) is not available during a Hyperball run. To make the condition computable \emph{online}, we introduce a proxy norm \(s_t\) to track the corresponding Base parameter norm and derive its recursive update from the Base Optimizer dynamics. This leads to a executable HyperTransfer procedure, which is formalized in Theorem~\ref{thm:hypertransfer}.

\begin{lemma}
\label{lemma_gradient}
For a scale-invariant function $f(w)$ satisfying
\(
f(cw)=f(w),\  \forall c>0,
\)
its gradient satisfies:
\(\nabla f(cw)=\frac{1}{c}\nabla f(w).\)
\end{lemma}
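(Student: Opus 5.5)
The plan is to differentiate the identity \(f(cw)=f(w)\) with respect to \(w\) using the chain rule, treating \(c>0\) as a fixed constant. We assume \(f\) is differentiable at the points in question; this is implicit in the statement, since it refers to \(\nabla f\). Define the map \(\phi_c(w)=cw\). Its derivative is the linear map \(h\mapsto ch\), which is self-adjoint with respect to the Frobenius inner product. Hence the left-hand side \(g(w):=f(\phi_c(w))\) has gradient \(\nabla g(w)=c\,\nabla f(cw)\).

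To keep the argument elementary, one can verify this through directional derivatives. For any direction \(h\in\mathbb{R}^{a\times b}\),
\[
\lim_{\epsilon\to0}\frac{f(c(w+\epsilon h))-f(cw)}{\epsilon}
= \lim_{\epsilon\to0}\frac{f(cw+\epsilon c h)-f(cw)}{\epsilon}
=\langle \nabla f(cw), c h\rangle .
\]
The same limit, computed using \(f(c(w+\epsilon h))=f(w+\epsilon h)\) and \(f(cw)=f(w)\), equals \(\langle \nabla f(w),h\rangle\).

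Equating the two expressions gives \(\langle c\nabla f(cw)-\nabla f(w),h\rangle=0\) for every \(h\). Choosing \(h=c\nabla f(cw)-\nabla f(w)\) forces this matrix to vanish. So \(c\nabla f(cw)=\nabla f(w)\), and dividing by \(c>0\) yields \(\nabla f(cw)=\frac1c\nabla f(w)\).

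There is no real obstacle here. The only point needing care is the hypothesis: the scale invariance must hold for all \(w\) in a neighborhood, which the statement provides since it holds for all \(w\). That is what lets us differentiate both sides of the identity rather than just evaluate it at a single point. Differentiability at \(cw\) is also needed to write the first limit as an inner product with \(\nabla f(cw)\).
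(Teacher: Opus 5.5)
Your proof is correct and takes the same approach as the paper, which differentiates both sides of $f(cw)=f(w)$ with respect to $w$ to obtain $c\nabla f(cw)=\nabla f(w)$ and then divides by $c$; your directional-derivative computation simply makes that chain-rule step explicit. One small economy: you only need to assume differentiability at $w$, since $f(x)=f(x/c)$ for $x$ near $cw$ makes $f$ differentiable at $cw$ automatically.
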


\begin{theorem}[Equivalence of Base and Hyperball Optimization]
\label{thm:hypertransfer}
Assume that \(f\) is scale-invariant, \(w_0=w_0^H\), and that Base Optimizer and Hyperball use the same sequence of minibatches. Denote \(s_t\) as the proxy norm that serves as an online scalar estimate of \(\|w_t\|\), initialized with \(s_0=R=\|w_0^H\|\). Whenever \(w_t=(s_t/R)w_t^H\), Lemma~\ref{lemma_gradient} gives \(g_t = \widetilde g_t^H=(R/s_t)g_t^H\). If the Hyperball momentum state is updated according to:
\begin{equation}
\label{momentum_align}
    m_t^H=\beta m_{t-1}^H+(1-\beta)\widetilde g_t^H,
\end{equation}
and the Hyperball learning rate is chosen as:
\begin{equation}
\label{elr_aglin}
    \eta_t^H=\frac{\eta_t\|u_t^H\|}{(1-\eta_t\lambda)s_t},
\end{equation}
where the update of \(s_t\) is given by:
\begin{equation}
\label{up}
   s_{t+1}^2
=(1-\eta_t\lambda)^2s_t^2
+\eta_t^2\|u_t^H\|^2
-2\eta_t(1-\eta_t\lambda)\frac{s_t}{R}
\langle u_t^H,w_t^H\rangle 
\end{equation}
then the angular-update condition in Proposition~\ref{prop1} holds at each iteration. Consequently, the parameters remain aligned throughout training, and
\(
f(w_{t+1})=f(w_{t+1}^H)\)
for all \(t\).

\end{theorem}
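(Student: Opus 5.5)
We argue by induction on \(t\), carrying the joint invariant
\[
\mathcal I_t:\quad w_t=\tfrac{s_t}{R}\,w_t^H,\qquad m_t=\tfrac{R}{s_t}\cdot(\text{nothing})\ \text{is not needed; instead}\ m_t = m_t^H,\ u_t=u_t^H .
\]
The base case is \(t=0\). We have \(w_0=w_0^H\), \(s_0=R\), and both momenta start at zero.

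For the inductive step, assume \(w_t=(s_t/R)w_t^H\) and \(m_{t-1}=m_{t-1}^H\). The minibatches are shared, so Lemma~\ref{lemma_gradient} with \(c=s_t/R\) gives \(g_t=(R/s_t)g_t^H=\widetilde g_t^H\). Comparing \eqref{momentum_align} with the Base momentum recursion \(m_t=\beta m_{t-1}+(1-\beta)g_t\) then yields \(m_t=m_t^H\). The update direction \(u_t\) is a fixed deterministic map of the momentum (orthogonalization for Muon, or the analogous map for the state-based case). Since both optimizers apply the same map to the same state, \(u_t=u_t^H\). In the Adam-type case the second-moment state must be carried with the same rescaled gradient \(\widetilde g_t^H\); I will note this as part of the hypothesis "Hyperball state updated with \(\widetilde g_t^H\)".

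With \(u_t=u_t^H\) and \(\|w_t\|=s_t\|w_t^H\|/R=s_t\), the definition of \(\eta_t^{\mathrm{eff}}\) in Proposition~\ref{prop1} becomes
\[
\eta_t^{\mathrm{eff}}=\frac{\eta_t\|u_t^H\|}{(1-\eta_t\lambda)s_t},
\]
which is exactly \eqref{elr_aglin}. Because \(u_t/\|u_t\|=u_t^H/\|u_t^H\|\), the angular-update condition holds at step \(t\). Proposition~\ref{prop1} then gives \(w_{t+1}/\|w_{t+1}\|=w_{t+1}^H/\|w_{t+1}^H\|\) and \(f(w_{t+1})=f(w_{t+1}^H)\).

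To close the induction we still need \(w_{t+1}=(s_{t+1}/R)w_{t+1}^H\). Given the direction match and \(\|w_{t+1}^H\|=R\), this reduces to showing \(s_{t+1}=\|w_{t+1}\|\). Expand the Base step \(w_{t+1}=(1-\eta_t\lambda)w_t-\eta_t u_t\):
\[
\|w_{t+1}\|^2=(1-\eta_t\lambda)^2\|w_t\|^2+\eta_t^2\|u_t\|^2-2\eta_t(1-\eta_t\lambda)\langle u_t,w_t\rangle .
\]
Substituting \(\|w_t\|=s_t\), \(u_t=u_t^H\) and \(\langle u_t,w_t\rangle=(s_t/R)\langle u_t^H,w_t^H\rangle\) reproduces \eqref{up}. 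Hence \(s_{t+1}=\|w_{t+1}\|\), and the invariant propagates.

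The main obstacle is justifying \(u_t=u_t^H\) beyond plain momentum SGD/Muon. Scale-sensitive normalizations (e.g.\ Adam's \(\epsilon\), or any update depending on raw gradient magnitude) are consistent only because every Hyperball state is fed the rescaled gradient \(\widetilde g_t^H\) rather than \(g_t^H\). I will therefore state explicitly that the whole optimizer state is built from \(\widetilde g_t^H\), so that each state coincides exactly with its Base counterpart. A second point is that the Hyperball step should be written as a retraction \(w_{t+1}^H=R\,\frac{w_t^H-\eta_t^H u_t^H/\|u_t^H\|\cdot(\cdot)}{\|\cdot\|}\), consistent with the form used in Proposition~\ref{prop1}. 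Under that form the direction match also follows without weight decay acting on the sphere.
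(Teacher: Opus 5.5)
Your proposal is correct and follows the paper's induction essentially step for step. The rescaled gradient from Lemma~\ref{lemma_gradient} gives $u_t=u_t^H$, the learning-rate rule then reduces to the angular-update condition of Proposition~\ref{prop1}, and expanding $\|(1-\eta_t\lambda)w_t-\eta_t u_t\|^2$ reproduces the proxy-norm recursion, so $s_{t+1}=\|w_{t+1}\|$. The differences are cosmetic: you carry $m_t=m_t^H$ explicitly in the invariant, and the garbled statement of $\mathcal I_t$ should be cleaned up. You also derive $\|w_t\|=s_t$ from $\|w_t^H\|=R$, where the paper includes it in the inductive hypothesis.
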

Theorem~\ref{thm:hypertransfer} shows how this one-step condition can be satisfied \emph{online}. Under the relation \(w_t=(s_t/R)w_t^H\), Lemma~\ref{lemma_gradient} implies that the rescaled Hyperball proxy gradient \(\widetilde{g}_t^H\) coincides with the Base gradient. Consequently,  the momentum recursion in Equation \eqref{momentum_align} synchronizes the optimizer states and yields the same update direction. The learning-rate choice in Equation \eqref{elr_aglin}, together with the proxy norm \(s_t\), then matches the effective angular displacement required by Proposition~\ref{prop1}. Moreover, the recursion for \(s_{t+1}\) computes the norm of the Base iterate solely from quantities available to the Hyperball optimizer, and therefore supplies the radial scale required at the next iteration.

Since \(w_0=w_0^H\) and \(s_0=R\), the alignment relation holds at iteration \(t=0\). Because of scaling invariance, the updates in Theorem~\ref{thm:hypertransfer} satisfy the angular-update condition of Proposition~\ref{prop1}, which preserves the relation at iteration \(t+1\). By induction, the normalized Base and Hyperball iterates remain aligned throughout training, yielding identical loss trajectories. Algorithm \ref{alg:hypertransfer} presents the algorithmic framework of HyperTransfer.

We further introduce Inverse HyperTransfer, which approximates the Hyperball dynamics using the dynamics of the Base Optimizer. Details are provided in Section \ref{app:inverse-hypertransfer}.

\begin{wrapfigure}[20]{r}{0.3\textwidth}
    \centering
    \includegraphics[width=\linewidth,page=1]{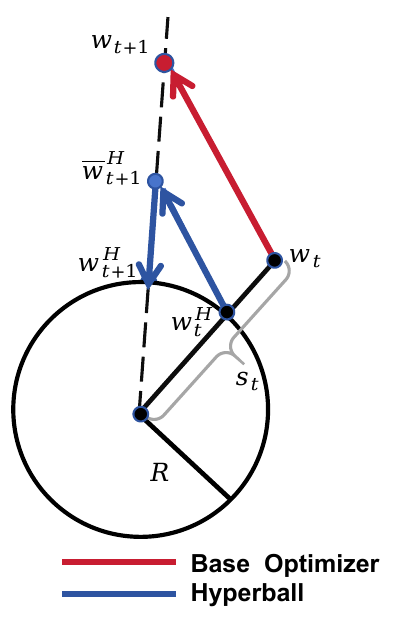}
    \caption{Geometric interpretation of HyperTransfer.}
    \label{geometric-interptation}
\end{wrapfigure}

\paragraph{Geometric Interpretation.}
Figure~\ref{geometric-interptation} illustrates the inductive mechanism underlying Proposition~\ref{prop1} and 
Let \(\bar{w}_{t+1}^H\) denote the pre-normalization Hyperball iterate shown in the figure,i.e. $ \bar{w}_{t+1}^H = w_t^H - \eta_t^H\frac{u_t}{\|u_t\|}$. Since scaling invariance, at iteration \(t\), suppose that the Base and Hyperball parameters satisfy \(w_t=(s_t/R)w_t^H\), where \(s_t=\|w_t\|\) and \(\|w_t^H\|=R\). Thus, \(w_t\) and \(w_t^H\) lie on the same positive ray from the origin and have identical directions.
Proposition~\ref{prop1} characterizes the one-step condition required to preserve this radial relation. After accounting for the different radial scales of the two iterates, the Base and Hyperball updates must induce the same effective angular displacement. Under this condition, the Base iterate \(w_{t+1}\) and the pre-normalization Hyperball iterate \(\bar{w}_{t+1}^H\) lie on the same positive ray, as illustrated by the dashed line in Figure~\ref{geometric-interptation}. Hyperball then renormalizes \(\bar{w}_{t+1}^H\) onto the sphere of radius \(R\), yielding \(w_{t+1}^H\). Since radial renormalization changes only the norm and preserves the direction, the normalized Base and Hyperball parameters remain aligned at iteration \(t+1\). The proxy norm \(s_t\) records the radial scale discarded by the Hyperball
projection. It is updated as the norm of the corresponding Base Optimizer
iterate.
Therefore, \(s_{t+1}=\|w_{t+1}\|\) and
\(w_{t+1}=(s_{t+1}/R)w_{t+1}^H\), restoring the radial relation required at
the next iteration.

\par
\WFclear

\begin{algorithm2e}[H]
\scriptsize
\caption{HyperTransfer}
\label{alg:hypertransfer}
\SetAlgoLined
\KwIn{
Loss function $f$; initial parameter $w_0^H$;
reference learning-rate schedule from  Base Optimizer $\{\eta_t\}_{t=0}^{T-1}$;
momentum coefficient $\beta$; weight decay coefficient $\lambda$;
number of iterations $T$
}
\KwOut{Updated parameter $w_T^H$, proxy norm $s_T$}
Initialize \(w_0^H\), \(R\leftarrow\|w_0^H\|\), and \(s_0\leftarrow R\)\;

\For{$t = 0,1,\dots,T-1$}{
Compute the gradient: $g_t^H=\nabla f(w_t^H)$;

Set the transferred gradient:
\(\widetilde g_t^H\leftarrow \frac{R}{s_t}g_t^H\)\;

Update momentum: \(m_t^H\leftarrow \widetilde g_t^H\) if \(t=0\); otherwise \(m_t^H\leftarrow \beta m_{t-1}^H+(1-\beta)\widetilde g_t^H\)\;

Compute the update: $u_{t}^H=\text{Newton-Schulz}(m_{t}^H)$;

Compute Hyperball learning rate: \(\eta_{t}^H \leftarrow \eta_t\frac{\|u_t^H\|}{(1-\eta_t\lambda)s_t}\)\;

Update parameter using Hyperball:
\(w_{t+1}^H \leftarrow
R\,(w_t^H-\eta_t^H R\,\frac{u_t^H}{\|u_t^H\|})
/\|w_t^H-\eta_t^H R\,\frac{u_t^H}{\|u_t^H\|}\|\)\;

Update proxy norm:
$s_{t+1}^2 \leftarrow
(1-\eta_t \lambda)^2 s_t^2
+ \eta_t^2 \|u_t^H\|^2
- 2\eta_t(1-\eta_t \lambda)
\frac{s_t}{R}\langle u_t^H, w_t^H \rangle$\;
}
\Return $w_T^H$, $s_T$;
\end{algorithm2e}

\begin{figure}[!t]
    \centering
    \includegraphics[width=0.9\linewidth]{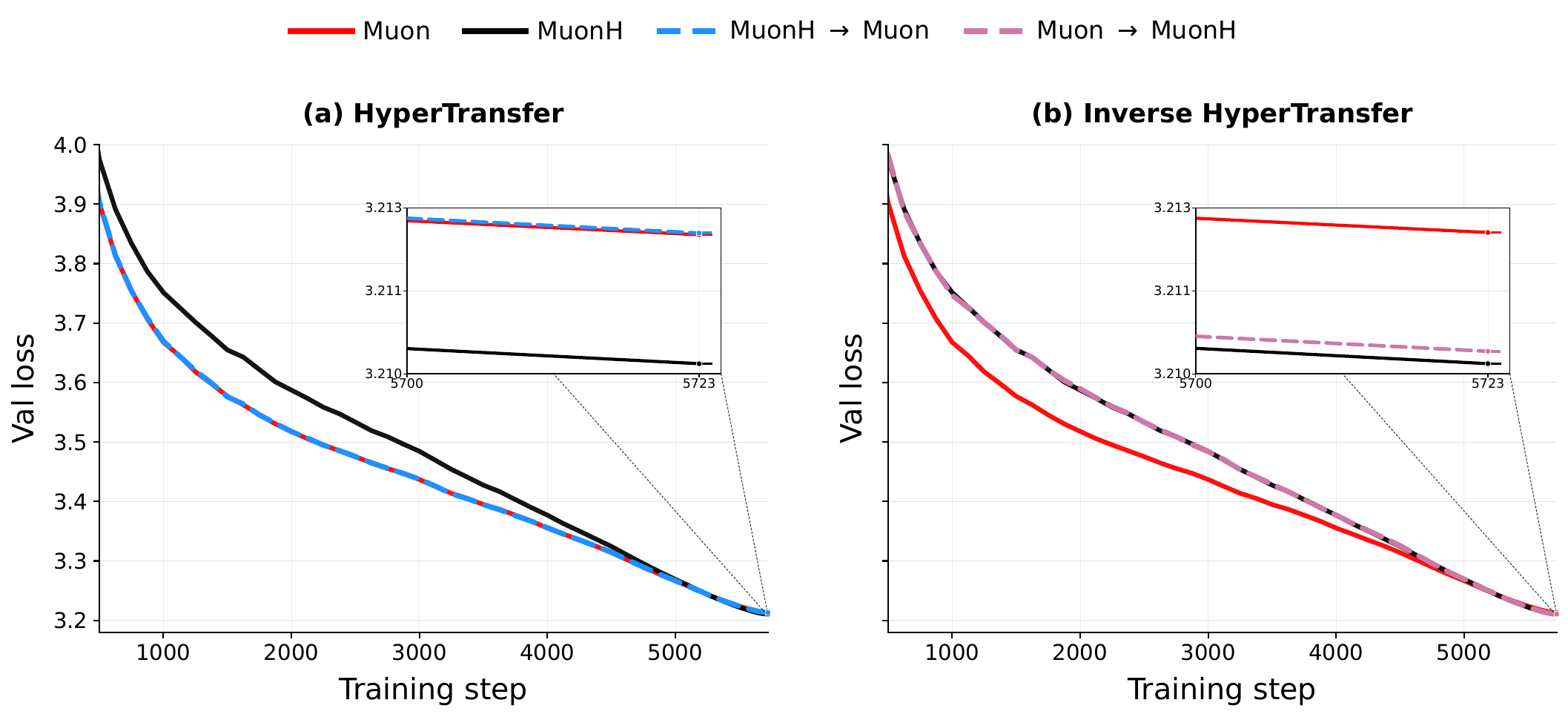}
    \caption{Bidirectional transfer between independently tuned Muon and MuonH baselines on the scale-invariant model $\mathcal{M}_1$ under a linear learning-rate schedule decaying to zero. The two baselines exhibit distinct training dynamics: Muon reduces the validation loss substantially faster during the early stage of training, whereas MuonH eventually overtakes Muon and achieves a lower final loss. In panel (a), HyperTransfer enables MuonH to reproduce the target Muon trajectory. In panel (b), Inverse HyperTransfer enables Muon to reproduce the target MuonH trajectory. }
    \label{fig1}
\end{figure}

\subsection{Empirical Validation on Strictly Scale-Invariant Networks}
\label{c1}
We empirically test the theoretical correspondence established in Section~\ref{thg} on the strictly scale-invariant network \(\mathcal{M}_1\). Using independently tuned Muon and MuonH reference runs under the same linear decay-to-zero schedule form, we evaluate HyperTransfer
and Inverse HyperTransfer in both directions. Further experimental details are provided in Section~\ref{Exp}.

Figure~\ref{fig1} reports the bidirectional transfer results. Although the independently tuned Muon and MuonH baselines exhibit distinct validation-loss trajectories, HyperTransfer enables MuonH to reproduce the Muon trajectory in panel~(a), while Inverse HyperTransfer enables Muon to reproduce the MuonH trajectory in panel~(b). In both directions, the transferred loss curves remain closely aligned with their respective reference trajectories throughout training, providing empirical support for the trajectory-matching.

For the HyperTransfer, Figure~\ref{fig:internal-alignment-main} further examines the parameter-wise dynamical alignment. The left panel evaluates the proxy-norm recursion in Equation~\eqref{up} by comparing the proxy norm \(s_t\) maintained by HyperTransfer with the parameter norm \(\lVert w_t\rVert\) measured from the target Muon run, while the right panel evaluates the effective-learning-rate transformation in Equation~\eqref{elr_aglin} by comparing the effective learning rates produced by HyperTransfer and measured from Muon. The close agreement in both panels shows that HyperTransfer successfully aligns the target norm evolution and effective learning rate dynamics. Complete parameter-wise results are provided in Figures~\ref{fig:scaleinv-norm0} and \ref{fig:scaleinv-eff-lr0}.

\begin{figure}[!t]
    \centering
    \includegraphics[width=1\linewidth]{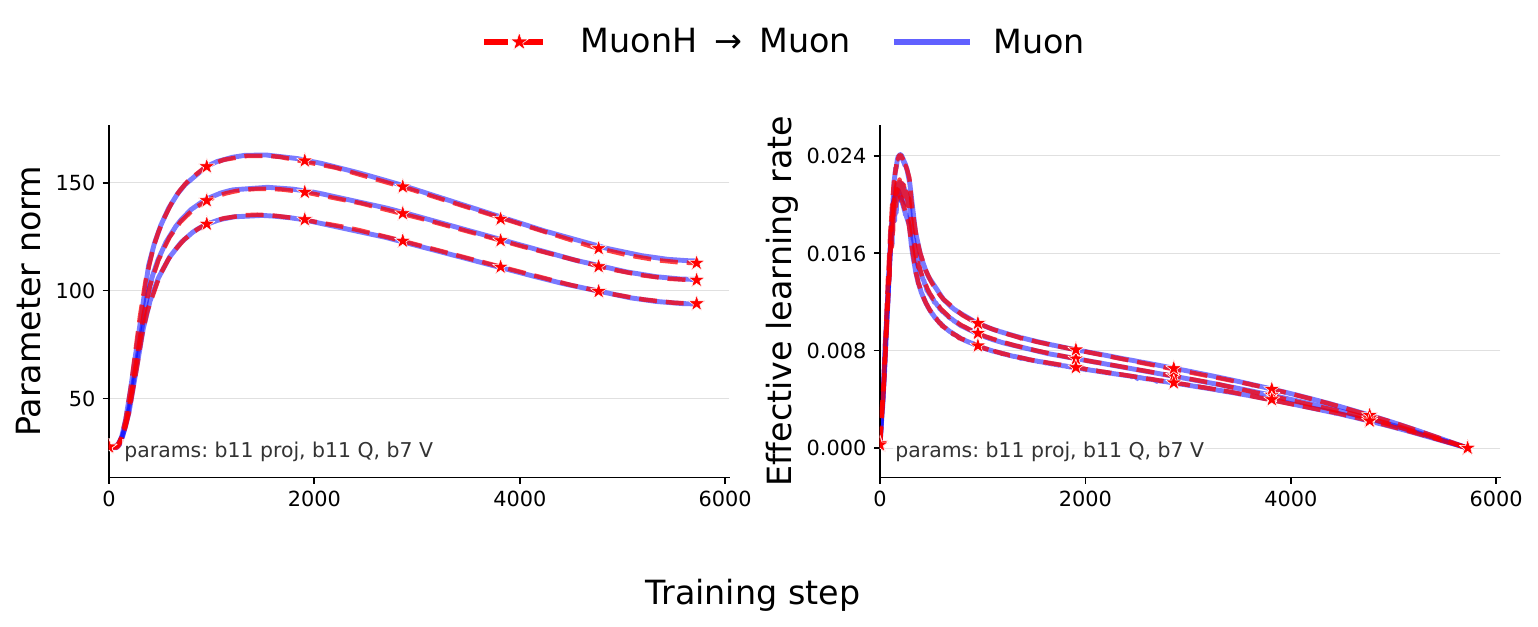}
    \caption{Parameter-wise dynamical alignment under HyperTransfer on the strictly scale-invariant network \(\mathcal{M}_1\) with a linear learning-rate schedule decaying to zero. Left: the proxy norm \(s_t\) versus the parameter norm \(\lVert w_t\rVert\) along the target Muon run. Right: the effective learning rate estimated by HyperTransfer versus that measured from Muon. In both panels, the curves closely overlap across representative hidden matrices, showing that HyperTransfer aligns the norm and effective-learning-rate
dynamics of the target optimizer.}
    \label{fig:internal-alignment-main}
\end{figure}

\section{Hypertransfer Beyond Strict Scale Invariance}
\label{sec:hypertransfer-plus}

\subsection{Hypertransfer+}

The HyperTransfer construction in Section~\ref{sec:hypertransfer} relies on strict scale invariance. In particular, the gradient at Base parameter can be obtained from the gradient at the Hyperball parameter by a rescaling. This property does not hold for the non-scale-invariant network $\mathcal{M}_2$, where $f(cw)\neq f(w)$ in general. Consequently, applying the standard HyperTransfer gradient rescaling is no longer sufficient.

One of the key differences between HyperTransfer+ and HyperTransfer lies in where the gradient is evaluated. HyperTransfer evaluates \(\nabla f(w_t^H)\) at the Hyperball parameter \(w_t^H\) and rescales it as \(\frac{R}{s_t}\nabla f(w_t^H)\), whereas HyperTransfer+ directly computes the gradient at
$
\widetilde{w}_t^H=\frac{s_t}{R} w_t^H.
$
Specifically, at iteration $t$, we form the Base-scale representative and evaluate
the gradient at this point as follows:
\begin{equation}
    \widetilde{w}_t^H
    =
    \frac{s_t}{R}w_t^H,
    \qquad
    \widetilde{g}_t^H
    =
    \nabla f(\widetilde{w}_t^H).
    \label{eq:htplus-representative}
\end{equation}

The gradient $\widetilde{g}_t^H$ is used directly in the same momentum
recursion as the Base Optimizer. The resulting update direction is
denoted by $u_t^H$. The Hyperball learning rate is then chosen to match the
effective angular update of the Base Optimizer:
\begin{equation}
    \eta_t^H
    =
    \frac{\eta_t\|u_t^H\|}
    {(1-\eta_t\lambda)s_t}.
    \label{eq:htplus-learning-rate}
\end{equation}
The Hyperball parameter is updated using the same normalized update rule as
in Section~\ref{sec:hypertransfer}. The proxy norm is updated to track the
norm of the corresponding Base Optimizer parameter. Its recursion is given by
\(
    s_{t+1}^2
    =
    (1-\eta_t\lambda)^2s_t^2
    +\eta_t^2\|u_t^H\|^2 
    -2\eta_t(1-\eta_t\lambda)
    \frac{s_t}{R}
    \langle u_t^H,w_t^H\rangle .
\)

When $\widetilde{w}_t^H$ coincides with the corresponding Base Optimizer
parameter, the two optimizers receive identical gradient inputs and therefore
produce the same momentum state and update direction. Equation~\eqref{eq:htplus-learning-rate}
then matches the effective angular update required by
Proposition~\ref{prop1}. Thus, HyperTransfer+ preserves the correspondence
between the Base parameter and the Hyperball parameter through the
proxy norm recursion.

HyperTransfer+ aligns the gradient evaluation points, optimizer states, and effective update sizes. This provides a practical extension of HyperTransfer to non-scale-invariant networks. The complete HyperTransfer+ procedure is summarized in
Algorithm~\ref{alg3}.  

The corresponding Inverse HyperTransfer+ procedure is given in Appendix~\ref{app:inverse-hypertransfer-plus}. It uses a Base Optimizer to approximate the training dynamics of Hyperball on the non-scale-invariant network $\mathcal{M}_2$, providing the inverse counterpart of HyperTransfer+.

\begin{algorithm2e}[H]
\scriptsize
\caption{HyperTransfer+}
\label{alg3}
\SetAlgoLined

{
\KwIn{
Loss function $f$; initial parameter $w_0^H$;
reference learning-rate schedule $\{\eta_t\}_{t=0}^{T-1}$;
momentum coefficient $\beta$; weight decay coefficient $\lambda$;
number of iterations $T$
}
\KwOut{Updated parameter $w_T^H$, proxy norm $s_T$}

Initialize \(w_0^H\), \(R\leftarrow\|w_0^H\|\), and \(s_0\leftarrow R\)\;

\For{$t = 0,1,\dots,T-1$}{

Evaluate the model at the proxy representative and compute gradient: \textbf{\textcolor{blue}{
\(g_t^H\leftarrow\nabla f(\frac{s_t}{R}w_t^H)\)\;}}

Set the transferred gradient: \textbf{\textcolor{blue}{
\(\widetilde g_t^H\leftarrow g_t^H\)\;}}

Update momentum: \(m_t^H\leftarrow \widetilde g_t^H\) if \(t=0\); otherwise \(m_t^H\leftarrow \beta m_{t-1}^H+(1-\beta)\widetilde g_t^H\)\;

Compute the Muon update:
\(u_t^H\leftarrow\operatorname{Newton\text{-}Schulz}(m_t^H)\)\;

Compute the transferred Hyperball angular step:
\(\eta_t^H \leftarrow \eta_t\frac{\|u_t^H\|}{(1-\eta_t\lambda)s_t}\)\;

Update parameter using Hyperball:
\(w_{t+1}^H \leftarrow
R\,(w_t^H-\eta_t^H R\,\frac{u_t^H}{\|u_t^H\|})
/\|w_t^H-\eta_t^H R\,\frac{u_t^H}{\|u_t^H\|}\|\)\;

Update proxy norm:
\(
s_{t+1}^2 \leftarrow
(1-\eta_t \lambda)^2 s_t^2
+\eta_t^2 \|u_t^H\|^2
-2\eta_t(1-\eta_t \lambda)
\frac{s_t}{R}\langle u_t^H, w_t^H \rangle
\)\;
}
\Return $w_T^H$, $s_T$\;
}
\end{algorithm2e}

\begin{figure}[!hb]
    \centering
    \includegraphics[width=.95\linewidth]{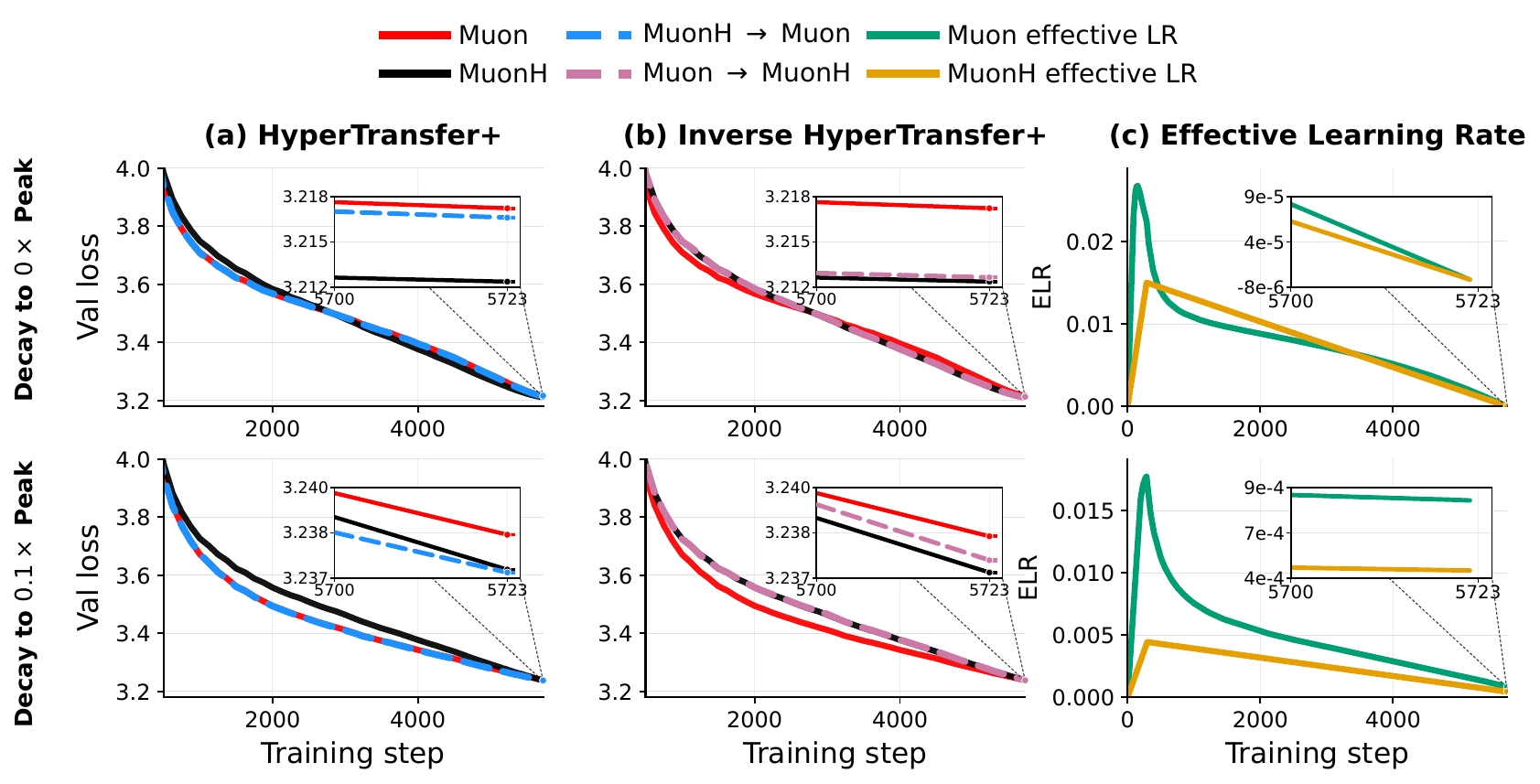}
    \caption{Bidirectional trajectory transfer on the non-scale-invariant network
\(\mathcal{M}_2\) under nominal learning-rate schedules ending at zero and at
\(0.1\times\) the peak learning rate. (a) HyperTransfer+: MuonH follows the
target Muon trajectory. (b) Inverse HyperTransfer+: Muon follows the target
MuonH trajectory. (c) Effective-learning-rate trajectories of the Muon and
MuonH baselines. The transferred trajectories closely match their respective
targets under both schedules, whereas the nominally matched schedules induce
different effective-learning-rate decay profiles.}
    \label{fig:hypertransfer-plus-grid}
\end{figure}

\subsection{Empirical Validation on Non-Scale-Invariant Networks}
\label{sec:hypertransfer-plus-experiments}

We evaluate HyperTransfer+ and Inverse HyperTransfer+ on the
non-scale-invariant network \(\mathcal{M}_2\) to determine whether they can
reproduce the training trajectories of their target optimizers without strict
scale invariance. We consider two nominal learning-rate schedules: an
end-to-end schedule that decays linearly to zero over the complete training
horizon, and a continued-pretraining schedule that retains \(0.1\times\) the
peak learning rate at the end of training. For each schedule, we independently
tune the Muon and MuonH baselines. We then apply HyperTransfer+ to make MuonH
reproduce the target Muon dynamics and apply Inverse HyperTransfer+ to make
Muon reproduce the target MuonH dynamics. Complete model configurations,
training settings, and selected hyperparameters are provided in
Appendix~\ref{Exp}.

Figure~\ref{fig:hypertransfer-plus-grid}(a) shows the HyperTransfer+ results,
whereas Figure~\ref{fig:hypertransfer-plus-grid}(b) shows the corresponding
Inverse HyperTransfer+ results. Under the decay-to-zero schedule,
HyperTransfer+ matches the Muon target at \(3.217\), with a gap below
\(5\times10^{-6}\), while Inverse HyperTransfer+ reaches \(3.213\), within
\(3\times10^{-4}\) of the MuonH target based on unrounded values. When the
schedule ends at \(0.1\times\) the peak learning rate, both baselines and both
transferred runs report a final validation loss of \(3.238\), with Inverse
HyperTransfer+ remaining within \(3.45\times10^{-4}\) of its MuonH target.
Under both schedules, the transferred trajectories closely follow their
respective targets throughout training, rather than matching only their final
validation losses.

These results demonstrate that HyperTransfer+ and Inverse HyperTransfer+
achieve accurate empirical trajectory alignment on the non-scale-invariant
network \(\mathcal{M}_2\). By correcting the gradient-evaluation points and
aligning the optimizer states and effective angular updates, the two methods
enable bidirectional transfer of optimization dynamics between a Base
Optimizer and its Hyperball counterpart even when strict scale invariance is
absent.

Figure~\ref{fig:hypertransfer-plus-grid}(c) further shows that nominal
learning-rate schedules with the same final-to-peak ratio can induce
substantially different effective-learning-rate decay profiles for Muon and
MuonH. Thus, matching nominal learning-rate schedules does not necessarily
match their induced effective-learning-rate decay. We examine this discrepancy
in the next subsection.

\begin{figure}[!ht]
    \centering
    \includegraphics[width=1\linewidth]{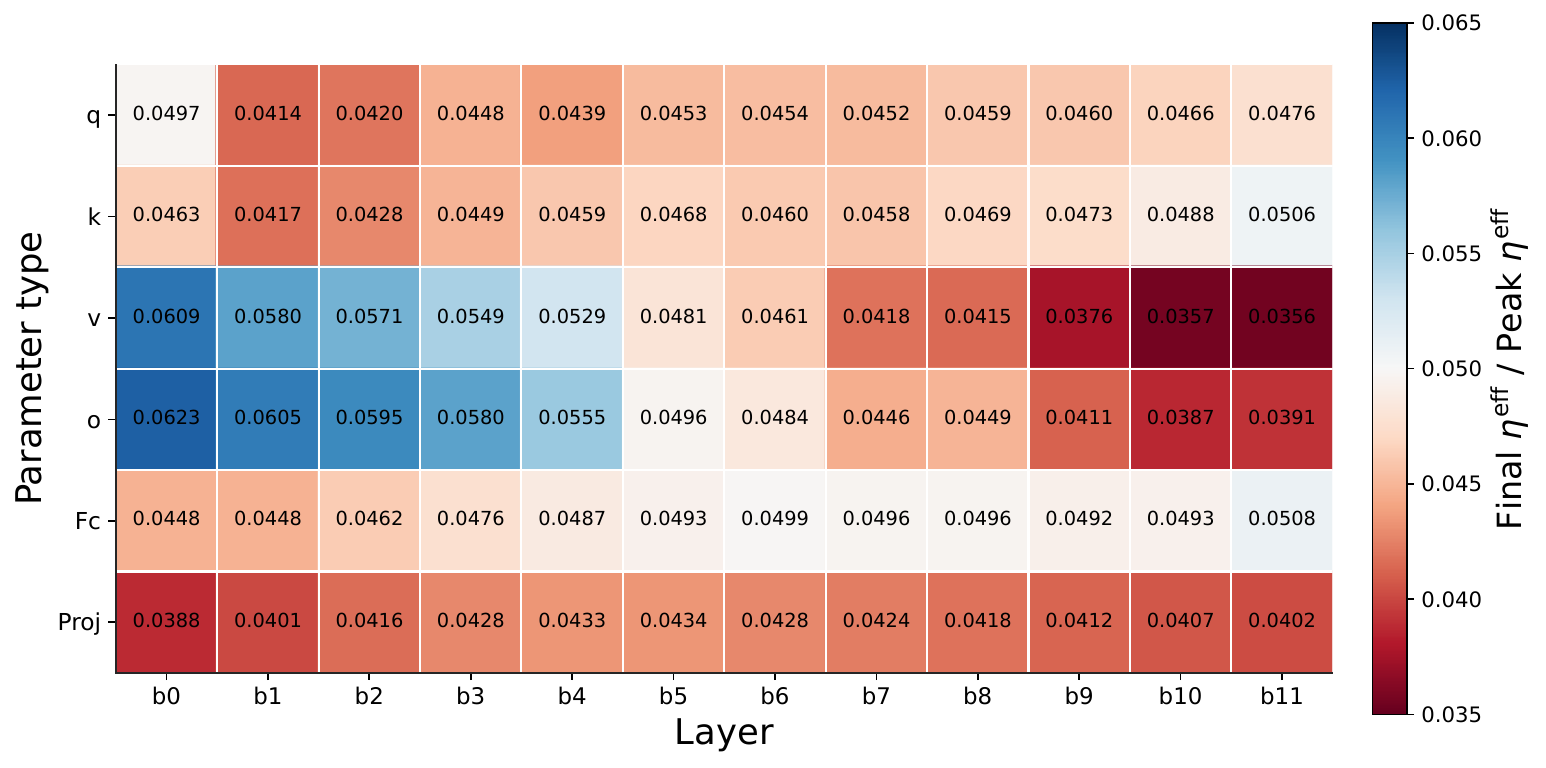}
    \caption{Parameter-wise final-to-peak effective-learning-rate ratios of
    Muon on the non-scale-invariant network $\mathcal{M}_2$ under a nominal
    final-to-peak ratio of $0.1$. Rows correspond to
    parameter types and columns correspond to Transformer blocks, with each
    cell reporting the ratio for one hidden matrix. The average across all
    hidden matrices is approximately $4.7\times10^{-2}$, showing that the
    nominal decay ratio does not equal the effective-learning-rate decay ratio
    for Muon.}
    \label{heat2}
\end{figure}

\subsection{Nominal versus Effective Learning-Rate Decay}
\label{sec_nominal_vs_elr}

Although Section~\ref{sec:hypertransfer-plus-experiments} demonstrates accurate bidirectional trajectory transfer on the non-scale-invariant network \(\mathcal{M}_2\), the rightmost column of Figure~\ref{fig:hypertransfer-plus-grid}(c) reveals an additional discrepancy: Muon and MuonH can induce substantially different effective-learning-rate decay profiles even when their nominal schedules have the same final-to-peak ratio. We therefore examine the relationship between nominal and effective learning-rate decay. 

We first set the nominal final-to-peak learning rate ratio to \(0.1\) for both optimizers and tune their peak learning rates independently. For Muon, we compute the final-to-peak ratio of the effective learning rate for each hidden weight matrix. As shown in Figure~\ref{heat2}, the average ratio across all hidden matrices is approximately \(4.7\times10^{-2}\), substantially below the nominal ratio of \(0.1\). By contrast, because MuonH's nominal learning rate directly determines its effective angular update, its effective final-to-peak ratio remains \(0.1\) by construction. Thus, matching the nominal final-to-peak ratios does not match the effective-learning-rate decay induced by the two optimizers.

Although both nominal schedules end at \(0.1\times\) their peak learning
rates, their effective final-to-peak ratios are not aligned: the
parameter-averaged ratio is approximately \(4.7\times10^{-2}\) for Muon, but
remains \(0.1\) for MuonH by construction. Matching the nominal ratio therefore
does not control effective learning-rate decay. We address this mismatch by
setting the MuonH final-to-peak ratio to \(4.7\times10^{-2}\) and independently
tuning its peak learning rate.

As shown in Figure~\ref{fig:effective-lr-matched-transfer}(a), after matching
the effective final-to-peak ratios, MuonH achieves a final validation loss of
\(3.221\), compared with \(3.238\) for the independently tuned Muon baseline.
Thus, the original ratio mismatch alone does not explain the performance gap:
the full effective-learning-rate trajectories can still differ despite matched
endpoint ratios, as shown in
Figure~\ref{fig:effective-lr-matched-transfer}(b).

We then apply Inverse HyperTransfer+ to transfer this MuonH trajectory to Muon.
The transferred Muon also reaches \(3.221\), within \(3\times10^{-5}\) of the
MuonH target, and outperforms the conventional Muon baseline. Inverse
HyperTransfer+ therefore induces an improved Muon learning-rate schedule from
the MuonH dynamics. These results show that optimizer comparisons must account
for the full effective-learning-rate trajectory, rather than only a scalar
decay ratio.

\begin{figure}[!ht]
    \centering
    \includegraphics[width=1\linewidth]{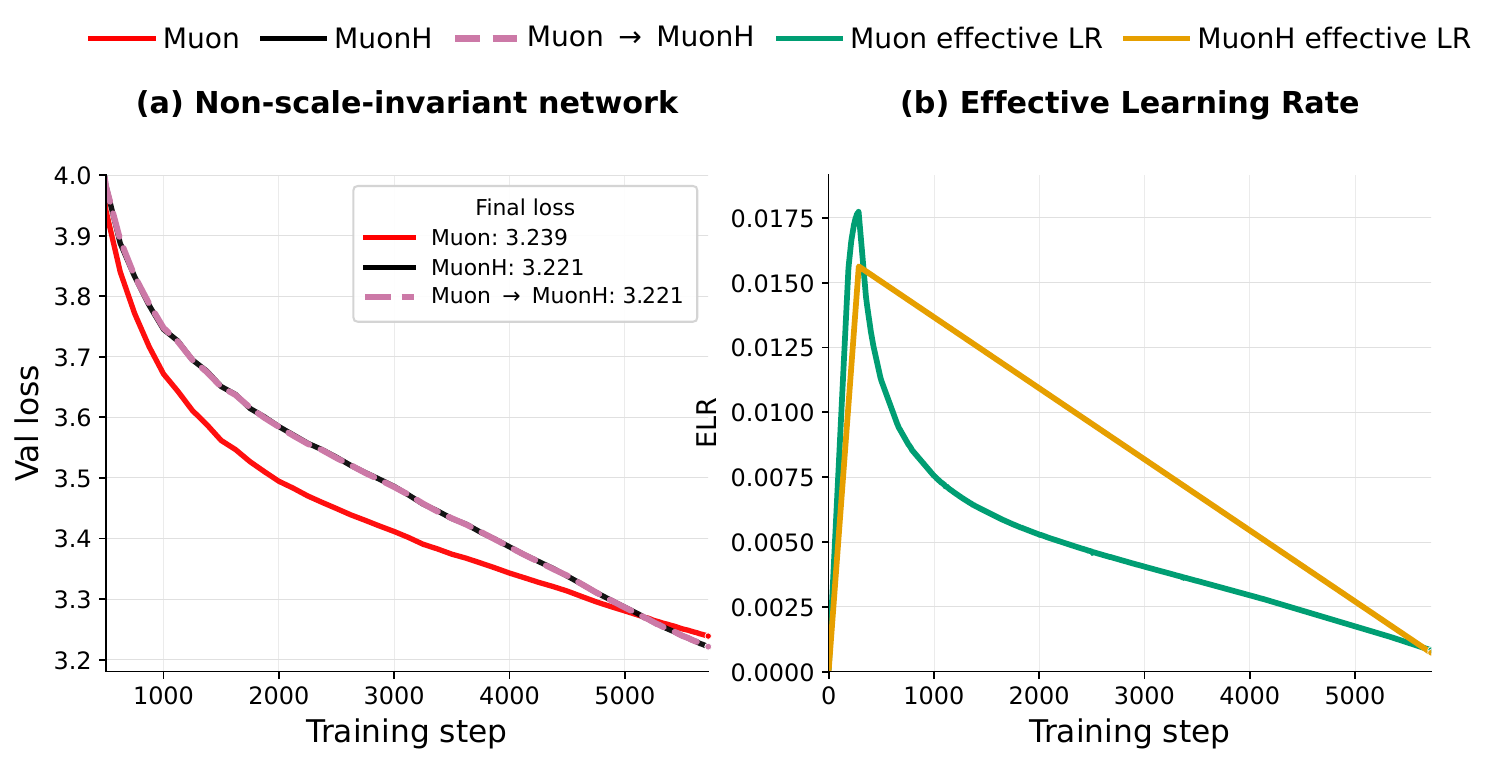}
    \caption{Transfer under matched final-to-peak effective-learning-rate
    ratios on the non-scale-invariant network $\mathcal{M}_2$. The peak
    learning rates are searched separately, while the effective final-to-peak
    ratio is matched across the two optimizers. The left panel
    compares the Muon baseline, the MuonH baseline, and the Muon trajectory
    obtained through Inverse HyperTransfer+. The right panel compares the
    corresponding Muon and MuonH effective-learning-rate schedules. The
    effective-learning-rate decay ratio of MuonH is matched to the average
    ratio measured for Muon, approximately $4.7\times10^{-2}$. The transferred
    Muon trajectory remains close to the target MuonH trajectory under
    effective-learning-rate alignment.}
    \label{fig:effective-lr-matched-transfer}
\end{figure}

\section{Conclusion}

In this work, we establish a theoretical equivalence between Base Optimizer and Hyperball through the lens of optimization dynamics. On scale-invariant network \(\mathcal{M}_1\), we show that the apparent difference between norm-constrained Hyperball and unconstrained Base Optimizer can be resolved by matching the angular update. Based on this insight, we propose HyperTransfer and its inverse mapping, enabling bidirectional transfer of optimization dynamics between Base Optimizer and Hyperball. We further extend both mappings to non-scale-invariant network \(\mathcal{M}_2\) through HyperTransfer+ and Inverse HyperTransfer+, which achieve accurate dynamical alignment beyond the strict scale-invariance assumption. Extensive experiments demonstrate that the proposed methods reliably recover target loss trajectories and reveal that Hyperball behavior is largely governed by its induced effective-learning-rate schedule and optimizer-state evolution. Our analysis also uncovers discrepancies between nominal and effective learning rates, highlighting the importance of effective-learning-rate alignment for fair comparisons between optimization methods.

\newpage

\bibliographystyle{iclr2027_conference}

\bibliography{sample}

\newpage
\clearpage
\appendix

\section{Inverse HyperTransfer and Inverse HyperTransfer+}
\subsection{Inverse HyperTransfer}
\label{app:inverse-hypertransfer}
Inverse HyperTransfer maps a prescribed Hyperball learning-rate schedule to
the nominal learning rate of a Base Optimizer. Unlike HyperTransfer, it does
not require a proxy norm because the Base Optimizer parameter norm
$\|w_t\|$ is directly available during optimization.

For the scale-invariant network $\mathcal{M}_1$, the gradient supplied to the
Base Optimizer is first rescaled to match the gradient evaluated by the
Hyperball optimizer. Specifically, using
$\nabla f(cw)=\nabla f(w)/c$, we set
$\widetilde{g}_t=(\|w_t\|/R)g_t$ and use $\widetilde{g}_t$ in the momentum
recursion. The update direction obtained from this recursion is denoted by
$u_t$.

Given a reference Hyperball learning rate $\eta_t^H$, the Base Optimizer
learning rate is obtained by solving the effective-learning-rate relation.
The resulting transformation is:
\begin{equation}
    \eta_t
    =
    \frac{\eta_t^H\|w_t\|}
    {\|u_t\|+\eta_t^H\lambda\|w_t\|}.
    \label{eq:inverse-learning-rate}
\end{equation}

The Base Optimizer then applies
$w_{t+1}=(1-\eta_t\lambda)w_t-\eta_tu_t$. Since the transferred gradient
produces $u_t=u_t^H$ and Equation~\eqref{eq:inverse-learning-rate} gives the
same angular update as the reference Hyperball optimizer, the normalized
parameters remain aligned. Therefore, on the scale-invariant network,
Inverse HyperTransfer reproduces the Hyperball loss trajectory while
allowing the parameter norm to evolve according to the Base Optimizer update.

The complete procedure is summarized in Algorithm~\ref{alg:inverse-hypertransfer}.

\begin{algorithm2e}[H]
\scriptsize
\caption{Inverse HyperTransfer}
\label{alg:inverse-hypertransfer}
\SetAlgoLined

{
\KwIn{
Loss function $f$; shared initial parameter $w_0=w_0^H$;
reference Hyperball learning-rate schedule
$\{\eta_t^H\}_{t=0}^{T-1}$;
momentum coefficient $\beta$; weight decay coefficient $\lambda$;
number of iterations $T$
}
\KwOut{Updated parameter $w_T$}

Initialize \(w_0\) and \(R\leftarrow\|w_0\|=\|w_0^H\|\)\;

\For{$t=0,1,\dots,T-1$}{
    Compute the gradient:
    $g_t\leftarrow\nabla f(w_t)$\;

    Set the transferred gradient:
    $\widetilde g_t\leftarrow \frac{\|w_t\|}{R}g_t$\;

    Update momentum: $m_t\leftarrow \widetilde g_t$ if $t=0$; otherwise $m_t\leftarrow \beta m_{t-1}+(1-\beta)\widetilde g_t$\;

    Compute the Muon update:
    $u_t\leftarrow\operatorname{Newton\text{-}Schulz}(m_t)$\;

    Transfer the Hyperball learning rate to the Base Optimizer:
    $\eta_t\leftarrow \eta_t^H\frac{\|w_t\|}{\|u_t\|+\eta_t^H\lambda\|w_t\|}$\;

    Update the parameter:
    $w_{t+1}\leftarrow
    (1-\eta_t\lambda)w_t-\eta_tu_t$\;
}

\Return $w_T$\;
}
\end{algorithm2e}

\subsection{Inverse HyperTransfer+}
\label{app:inverse-hypertransfer-plus}

Inverse HyperTransfer+ extends the inverse construction to the
non-scale-invariant network $\mathcal{M}_2$. Because the gradient at a
rescaled parameter cannot be recovered by a scalar rescaling, the Base
Optimizer evaluates the model directly at the Hyperball-scale
representative. At iteration $t$, this representative and the gradient used
by the Base Optimizer are:
\begin{equation}
    \widetilde{w}_t
    =
    \frac{R}{\|w_t\|}w_t,
    \qquad
    g_t
    =
    \nabla f(\widetilde{w}_t).
    \label{eq:inverse-plus-gradient}
\end{equation}

The gradient $g_t$ is used directly in the Base Optimizer momentum recursion,
without an additional scale-based rescaling. The update direction is denoted
by $u_t$. Given the prescribed Hyperball learning rate $\eta_t^H$, the Base
Optimizer learning rate is still obtained from the inverse transformation:
\begin{equation}
    \eta_t
    =
    \frac{\eta_t^H\|w_t\|}
    {\|u_t\|+\eta_t^H\lambda\|w_t\|}.
    \label{eq:inverse-plus-learning-rate}
\end{equation}

The Base Optimizer is then updated according to
$w_{t+1}=(1-\eta_t\lambda)w_t-\eta_tu_t$. By evaluating the gradient at
$\widetilde{w}_t=(R/\|w_t\|)w_t$, Inverse HyperTransfer+ aligns the gradient
input and update direction with those of the reference Hyperball optimizer.
The parameter norm itself remains unconstrained and evolves naturally under
the Base Optimizer update.

Since $\mathcal{M}_2$ is not scale-invariant, the construction does not imply
exact equality of the loss values at $w_t$ and $w_t^H$. Instead,
Inverse HyperTransfer+ aligns the designated gradient evaluation points and
the effective angular updates, providing the inverse counterpart of
HyperTransfer+ for non-scale-invariant networks.

The complete procedure is summarized in
Algorithm~\ref{alg4}.

\begin{algorithm2e}[H]
\scriptsize
\caption{Inverse HyperTransfer+}
\label{alg4}
\SetAlgoLined

{
\KwIn{
Loss function \(f\); shared initialization \(w_0=w_0^H\);
Hyperball learning-rate schedule \(\{\eta_t^H\}_{t=0}^{T-1}\);
momentum coefficient \(\beta\); weight-decay coefficient \(\lambda\);
number of iterations \(T\)
}
\KwOut{Updated parameter \(w_T\)}

Initialize \(w_0\) and \(R\leftarrow\|w_0\|=\|w_0^H\|\)\;

\For{$t=0,1,\dots,T-1$}{

Evaluate the model at the Hyperball-scale representative and compute: 
\textbf{\textcolor{blue}{\(
g_t\leftarrow
\nabla f\!\left(\frac{R}{\|w_t\|}w_t\right)
\)\;}}

Set the transferred gradient:
\textbf{\textcolor{blue}{\(
\widetilde g_t\leftarrow g_t
\)\;}}

Update momentum:
\(
m_t\leftarrow\widetilde g_t
\)
if \(t=0\); otherwise
\(
m_t\leftarrow
\beta m_{t-1}+(1-\beta)\widetilde g_t
\)\;

Compute the Muon update:
\(
u_t\leftarrow
\operatorname{Newton\text{-}Schulz}(m_t)
\)\;

Transfer the Hyperball learning rate to the Base Optimizer:
\(
\eta_t\leftarrow
\eta_t^H
\frac{\|w_t\|}
{\|u_t\|+\eta_t^H\lambda\|w_t\|}
\)\;

Update the parameter:
\(
w_{t+1}\leftarrow
(1-\eta_t\lambda)w_t-\eta_tu_t
\)\;
}

\Return \(w_T\)\;
}
\end{algorithm2e}

\section{Proof of Proposition}

\subsection{Proof of Proposition~\ref{prop1}}
\begin{proof}
Let $R=\|w_t^H\|$ denote the fixed Hyperball radius. We prove the result by
induction. At $t=0$, the
initialization gives $w_0=w_0^H$, so the normalized parameters are aligned.

Assume that $w_t=(\|w_t\|/R)w_t^H$ at iteration $t$. Let
$\bar w_{t+1}^H$ denote the Hyperball iterate before radial normalization.
The angular-update condition gives:
\begin{equation}
    \begin{aligned}
    \bar{w}_{t+1}^H & = w_t^H-\eta_t^H R\frac{u_t^H}{\|u_t^H\|} \\
    & =\frac{R}{\|w_t\|} \left(w_t-\frac{\eta_t}{1-\eta_t\lambda}u_t\right) \\
    &=\frac{R}{(1-\eta_t\lambda)\|w_t\|}w_{t+1},
    \end{aligned}
\end{equation}
where the second equality uses
$\eta_t^{H}=\eta_t\|u_t\|/((1-\eta_t\lambda)\|w_t\|)$ and the last equality
uses the Base Optimizer update
$w_{t+1}=(1-\eta_t\lambda)w_t-\eta_tu_t$. Since
$1-\eta_t\lambda>0$, the two pre-normalization
updates lie on the same positive ray. Radial normalization changes only the
norm, so $w_{t+1}/\|w_{t+1}\|=w_{t+1}^H/\|w_{t+1}^H\|$. Scale invariance
then gives $f(w_{t+1})=f(w_{t+1}^H)$, completing the induction.
\end{proof}

\subsection{Proof of Theorem~\ref{thm:hypertransfer}}
\begin{proof}
Let $R=\|w_0^H\|$ and initialize $s_0=R$. We prove the result by
induction. At $t=0$, $w_0=w_0^H=(s_0/R)w_0^H$, so the required relation
holds.

Assume that at iteration $t$, $w_t=(s_t/R)w_t^H$ and
$s_t=\|w_t\|$. By Lemma~\ref{lemma_gradient}, the transferred Hyperball
gradient satisfies
$\widetilde g_t^H=(R/s_t)g_t^H=g_t$. Thus the two momentum recursions receive
the same gradient sequence and produce the same update direction,
$u_t^H=u_t$.

The learning-rate rule in Equation~\eqref{elr_aglin} then satisfies the
angular-update condition in Proposition~\ref{prop1}, because:
\begin{equation}
    \eta_t^H\frac{u_t^H}{\|u_t^H\|}
    =
    \frac{\eta_tu_t}{(1-\eta_t\lambda)s_t}
    =
    \eta_t^{\mathrm{eff}}\frac{u_t}{\|u_t\|}.
\end{equation}
Proposition~\ref{prop1} therefore gives
$w_{t+1}/\|w_{t+1}\|=w_{t+1}^H/R$. It remains to show that the proxy norm
has the correct value at the next iteration. The recursion in the theorem
gives:
\begin{equation}
    \begin{aligned}
    s_{t+1}^2
    &=(1-\eta_t\lambda)^2s_t^2
      +\eta_t^2\|u_t^H\|^2
      -2\eta_t(1-\eta_t\lambda)
       \frac{s_t}{R}\langle u_t^H,w_t^H\rangle \\
    &=(1-\eta_t\lambda)^2\|w_t\|^2
      +\eta_t^2\|u_t\|^2
      -2\eta_t(1-\eta_t\lambda)\langle u_t,w_t\rangle \\
    &=\|(1-\eta_t\lambda)w_t-\eta_tu_t\|^2 \\
    &=\|w_{t+1}\|^2.
    \end{aligned}
\end{equation}
Since both quantities are nonnegative, $s_{t+1}=\|w_{t+1}\|$. Hence
$w_{t+1}=(s_{t+1}/R)w_{t+1}^H$, and the induction is complete.
\end{proof}

\subsection{Proof of Lemma~\ref{lemma_gradient}}
\begin{proof}
Since $f(cw)=f(w)$ for every $c>0$, differentiating both sides with respect
to $w$ gives:
\begin{equation}
    c\nabla f(cw)=\nabla f(w).
\end{equation}
Therefore, $\nabla f(cw)=\nabla f(w)/c$.
\end{proof}

\subsection{Proof of HyperTransfer+}
\begin{proposition}
\label{prop:hypertransfer-plus}
Assume that $w_0=w_0^H$, $R=\|w_0^H\|$, and $s_0=R$. Suppose that the
Base Optimizer and Hyperball use identical optimizer-state initializations,
the same optimizer-state recursion, and the same sequence of minibatches.
If HyperTransfer+ evaluates the gradient at the Base-scale representative
$(s_t/R)w_t^H$, uses the learning-rate rule
$\eta_t^H=\eta_t\|u_t^H\|/((1-\eta_t\lambda)s_t)$, and updates $s_t$ by
Equation \eqref{up}, then
$s_t=\|w_t\|$ and $w_t=(s_t/R)w_t^H$ for every iteration. Consequently,
the loss evaluated at the Base-scale representative satisfies
$f(w_t)=f(\frac{s_t}{R}w_t^H)$ for every iteration, including for a
non-scale-invariant $f$.
\end{proposition}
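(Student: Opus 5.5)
The proof is an induction on $t$ with hypothesis $\mathcal{H}_t$: $s_t=\|w_t\|$, $w_t=(s_t/R)w_t^H$, and the two optimizer states agree, $m_{t-1}^H=m_{t-1}$ (vacuous at $t=0$ since the states are initialized identically). The base case holds because $w_0=w_0^H$ and $s_0=R=\|w_0\|$, so $(s_0/R)w_0^H=w_0$. Unlike the proof of Theorem~\ref{thm:hypertransfer}, the gradient step will not use Lemma~\ref{lemma_gradient}. Scale invariance is never needed: the gradient is evaluated at the Base point itself rather than rescaled from the sphere.

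For the inductive step, assume $\mathcal{H}_t$. The HyperTransfer+ gradient is $\widetilde g_t^H=\nabla f((s_t/R)w_t^H)=\nabla f(w_t)=g_t$, since the two evaluation points coincide and the minibatch is shared. Both methods apply the same momentum recursion to the same previous state and the same gradient, so $m_t^H=m_t$. The map from momentum to update (Newton--Schulz) is deterministic, so $u_t^H=u_t$.

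Next I match the angular step. With $s_t=\|w_t\|$, the rule $\eta_t^H=\eta_t\|u_t^H\|/((1-\eta_t\lambda)s_t)$ gives exactly the angular-update condition of Proposition~\ref{prop1}. I will repeat the computation in that proof rather than cite it, because Proposition~\ref{prop1} as stated assumes scale invariance, although its algebra does not use it. The computation is
\[
\bar w_{t+1}^H=w_t^H-\eta_t^H R\frac{u_t^H}{\|u_t^H\|}=\frac{R}{s_t}\Bigl(w_t-\frac{\eta_t}{1-\eta_t\lambda}u_t\Bigr)=\frac{R}{(1-\eta_t\lambda)s_t}\,w_{t+1}.
\]
Since $1-\eta_t\lambda>0$, normalizing to radius $R$ gives $w_{t+1}^H=R\,w_{t+1}/\|w_{t+1}\|$. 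For the norm, I substitute $s_t=\|w_t\|$, $u_t^H=u_t$, and $(s_t/R)w_t^H=w_t$ into Equation~\eqref{up}. The right-hand side becomes $\|(1-\eta_t\lambda)w_t-\eta_t u_t\|^2=\|w_{t+1}\|^2$. Taking nonnegative square roots gives $s_{t+1}=\|w_{t+1}\|$, and therefore $w_{t+1}=(s_{t+1}/R)w_{t+1}^H$. Together with $m_t^H=m_t$, this establishes $\mathcal{H}_{t+1}$.

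The loss statement follows directly: $f(w_t)=f((s_t/R)w_t^H)$ holds because the two arguments are identical, with no invariance required.

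The main obstacle is a degenerate case: normalization needs $\|w_{t+1}\|\neq 0$, equivalently $\bar w_{t+1}^H\neq 0$. I would state this as a standing nondegeneracy assumption, as the earlier results implicitly do, together with $1-\eta_t\lambda>0$. A second point to check is that the induction carries the optimizer state in its hypothesis, so that the momentum recursions stay synchronized.
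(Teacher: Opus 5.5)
Your proof is correct and follows the paper's argument. Both use the same induction on $w_t=(s_t/R)w_t^H$ and $s_t=\|w_t\|$, identical gradients because the evaluation points coincide, the pre-normalization Hyperball iterate as a positive multiple of $w_{t+1}$, and Equation~\eqref{up} collapsing to $\|w_{t+1}\|^2$. You also carry $m_{t-1}^H=m_{t-1}$ in the hypothesis and state the conditions $1-\eta_t\lambda>0$ and $w_{t+1}\neq 0$ explicitly, which makes rigorous what the paper leaves implicit.
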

\begin{proof}
Let $R=\|w_0^H\|$ and initialize $s_0=R$. HyperTransfer+ evaluates the
gradient at the Base-scale representative $(s_t/R)w_t^H$. At $t=0$, this
representative equals $w_0=w_0^H$.

Assume that $(s_t/R)w_t^H=w_t$ and $s_t=\|w_t\|$. The two optimizers then
receive the same gradient input and produce the same update direction,
$u_t^H=u_t$. The HyperTransfer+ learning-rate rule gives:
\begin{equation}
    \eta_t^H\frac{u_t^H}{\|u_t^H\|}
    =
    \frac{\eta_tu_t}{(1-\eta_t\lambda)s_t}.
\end{equation}
Substitution into the Hyperball update shows that its pre-normalization
iterate is a positive scalar multiple of
$w_{t+1}=(1-\eta_t\lambda)w_t-\eta_tu_t$. Radial normalization therefore
places $w_{t+1}^H$ on the same ray as $w_{t+1}$. The proxy norm recursion is
the squared norm of the corresponding Base update, so
$s_{t+1}=\|w_{t+1}\|$ and
$(s_{t+1}/R)w_{t+1}^H=w_{t+1}$. This closes the induction.

For the non-scale-invariant network, this proof aligns the designated
gradient evaluation points and optimizer updates; it does not imply exact
equality of $f(w_t)$ and $f(w_t^H)$.
\end{proof}

\subsection{Proof for Adaptive Optimizers}
\begin{proposition}
\label{prop:adaptive-hypertransfer}
Assume that $f$ is scale-invariant, the Base Optimizer and Hyperball are
identically initialized, and they use the same minibatches, adaptive-state
recursions, and hyperparameters. If the transferred gradient is
$\widetilde g_t^H=(R/s_t)g_t^H$ and the Hyperball learning rate is chosen as
$\eta_t^H=\eta_t\|u_t^H\|/((1-\eta_t\lambda)s_t)$, then the adaptive states
and update directions of the two optimizers coincide. Consequently,
$w_t/\|w_t\|=w_t^H/\|w_t^H\|$ and
$f(w_t)=f(w_t^H)$ for every iteration, provided that $s_t$ is updated by
the proxy norm recursion corresponding to the Base Optimizer update.
\end{proposition}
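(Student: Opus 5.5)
We argue by induction on $t$, with the joint hypothesis
\[
w_t=(s_t/R)\,w_t^H,\qquad s_t=\|w_t\|,\qquad \mathcal S_t^H=\mathcal S_t ,
\]
where $\mathcal S_t$ and $\mathcal S_t^H$ denote the full adaptive optimizer states after step $t$. For Adam these are the first moment $m_t$ and the second moment $v_t$, together with any bias-correction counters. The base case is immediate. Since $w_0=w_0^H$ and $s_0=R$, the relation holds at $t=0$. The states coincide before the first update because the initializations are identical by assumption.

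For the inductive step, assume the hypothesis at iteration $t$, with the states before the update equal. The key observation is that an adaptive update is a deterministic function $u_t=\Phi(\mathcal S_{t-1},g_t)$ of the previous state and the incoming gradient. This map involves coordinatewise squares, square roots, $\epsilon$-regularization and bias corrections, but it does not depend on the parameter itself. By Lemma~\ref{lemma_gradient} and the inductive relation $w_t=(s_t/R)w_t^H$, we obtain
\[
\widetilde g_t^H=(R/s_t)\,g_t^H=g_t .
\]
Feeding identical inputs into identical recursions $\Phi$ then gives $\mathcal S_t^H=\mathcal S_t$ and $u_t^H=u_t$. This is the only place where the adaptive structure enters. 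The nonlinearity of the second-moment recursion is harmless precisely because we match the gradient itself rather than a rescaled copy. If we only matched up to a scalar, the $\epsilon$ term would break scale equivariance.

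Once $u_t^H=u_t$ is established, the rest follows the proof of Theorem~\ref{thm:hypertransfer}. With $s_t=\|w_t\|$, the learning-rate rule gives
\[
\eta_t^H\,\frac{u_t^H}{\|u_t^H\|}=\frac{\eta_t u_t}{(1-\eta_t\lambda)\|w_t\|}=\eta_t^{\mathrm{eff}}\,\frac{u_t}{\|u_t\|},
\]
so Proposition~\ref{prop1} yields $w_{t+1}/\|w_{t+1}\|=w_{t+1}^H/R$. Next, the proxy-norm recursion \eqref{up} must reproduce $s_{t+1}=\|w_{t+1}\|$. We expand $\|(1-\eta_t\lambda)w_t-\eta_tu_t\|^2$ exactly as before, substituting $w_t=(s_t/R)w_t^H$ and $u_t=u_t^H$. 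Together these give $w_{t+1}=(s_{t+1}/R)w_{t+1}^H$, which closes the induction. Scale invariance then gives $f(w_{t+1})=f(w_{t+1}^H)$.

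The main obstacle is justifying the decoupled-weight-decay form $w_{t+1}=(1-\eta_t\lambda)w_t-\eta_t u_t$ for AdamW-type updates, and stating clearly that $u_t$ includes all adaptive preconditioning. The proxy-norm recursion is stated for exactly this form, which is what the hypothesis "the proxy norm recursion corresponding to the Base Optimizer update" allows. If the Base Optimizer instead used coupled $L_2$ decay added to the gradient, the weight-decay term would enter $\Phi$. In that case the transferred gradient would have to include $\lambda w_t=\lambda(s_t/R)w_t^H$, and we would note this modification rather than prove the unmodified statement.
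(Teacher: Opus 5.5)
Your proposal is correct and follows the paper's proof essentially step for step. Both argue by induction on $w_t=(s_t/R)w_t^H$ with $s_t=\|w_t\|$, use Lemma~\ref{lemma_gradient} to identify $\widetilde g_t^H$ with $g_t$, feed identical gradients into identical adaptive recursions (including bias correction) to get equal states and $u_t^H=u_t$, and then close the induction with the learning-rate rule, Proposition~\ref{prop1}, and the proxy-norm expansion; your only change, keeping equality of the optimizer states in the inductive hypothesis explicitly, just makes precise what the paper leaves implicit.
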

\begin{proof}
The same induction extends to adaptive optimizers such as AdamW. Let
$m_t$ and $v_t$ denote the first- and second-moment states, respectively.
Assume that the two optimizers are identically initialized and that
$w_t=(s_t/R)w_t^H$. By Lemma~\ref{lemma_gradient}, the transferred
Hyperball gradient satisfies $\widetilde g_t^H=(R/s_t)g_t^H=g_t$.

The first- and second-moment recursions therefore receive identical gradient
inputs. For AdamW, the corresponding equalities are:
\begin{equation}
    \begin{aligned}
    m_t^H
    &=\beta_1m_{t-1}^H+(1-\beta_1)\widetilde g_t^H
      =\beta_1m_{t-1}+(1-\beta_1)g_t
      =m_t, \\
    v_t^H
    &=\beta_2v_{t-1}^H+(1-\beta_2)\widetilde g_t^H\odot\widetilde g_t^H
      =\beta_2v_{t-1}+(1-\beta_2)g_t\odot g_t
      =v_t.
    \end{aligned}
\end{equation}
Consequently, the adaptive update directions are identical. For example,
including the usual bias-correction factors, AdamW gives:
\begin{equation}
    u_t^H
    =
    \frac{\widehat m_t^H}{\sqrt{\widehat v_t^H}+\epsilon}
    =
    \frac{\widehat m_t}{\sqrt{\widehat v_t}+\epsilon}
    =u_t.
\end{equation}
The Hyperball learning-rate rule consequently satisfies:
\begin{equation}
    \eta_t^H\frac{u_t^H}{\|u_t^H\|}
    =
    \frac{\eta_tu_t}{(1-\eta_t\lambda)s_t}
    =
    \eta_t^{\mathrm{eff}}\frac{u_t}{\|u_t\|}.
\end{equation}
Proposition~\ref{prop1} therefore preserves the alignment of the normalized
parameters. The proxy norm recursion gives
$s_{t+1}=\|w_{t+1}\|$, so the same argument applies at the next iteration.
Thus, HyperTransfer also applies to adaptive optimizers, provided that the
two optimizers use the same adaptive-state recursion, initialization, and
gradient inputs.
\end{proof}

\section{Experimental Setting}
\label{Exp}

\subsection{Common Training Setup}

All reported experiments use the same GPT-2 language-model training setup unless otherwise stated. The model has 124M parameters, vocabulary size 50304, 12 transformer blocks, model dimension 768, attention head dimension 128, and sequence length 1024. The training data are sampled from FineWeb  binary shards. Each run is trained for 5723 optimizer updates with global batch size
\(
8 \times 64 \times 1024 = 524288
\)
tokens per update, corresponding to 3B total training tokens. All runs use random seed 1234.

We use a linear learning-rate schedule with warmup fraction 0.05, giving 286 warmup steps. After warmup, the learning rate decays linearly to a final learning-rate ratio. The main final learning-rate ratios are \(0\) and \(0.1\). Validation is performed every 125 steps before the final 10\% of training and every 25 steps near the end of training.

Hidden two-dimensional weight matrices inside transformer blocks are optimized by Muon or MuonH. All remaining parameters are optimized by AdamW. AdamW uses betas \((0.9,0.95)\), \(\epsilon=10^{-10}\), weight decay 0.1. Muon uses momentum 0.95 and weight decay 0.1. Hidden weight initialization uses truncated normal initialization with standard deviation \(1/\sqrt{d_{\mathrm{in}}}\), embedding and language-model-head parameters use truncated normal initialization with standard deviation 0.006, one-dimensional parameters are initialized to zero, so RMSNorm effective gains start from one. 

In the transfer experiments, we use HyperTransfer and Inverse HyperTransfer on \(\mathcal{M}_1\), and HyperTransfer+ and Inverse HyperTransfer+ on \(\mathcal{M}_2\). HyperTransfer and HyperTransfer+ make MuonH follow the target Muon dynamics, whereas Inverse HyperTransfer and Inverse HyperTransfer+ make Muon follow the target MuonH dynamics. For each transfer run, the transferred optimizer and the target or reference optimizer start from the same initialization and use the same data order. The AdamW learning-rate schedule for all remaining parameters is also set to that of the target run, using the corresponding target AdamW peak learning rate with the same warmup and decay rule. 

\subsection{Network Variants}

We use two network variants. The first is the strictly scale-invariant network. It applies RMSNorm after each selected linear transformation and before any subsequent linear transformation or nonlinear activation, making the corresponding two-dimensional weight parameters invariant to positive rescaling. Specifically, RMSNorm is applied after the Q, K, V, attention output projection, MLP \texttt{fc}, and MLP \texttt{proj} linear layers. We use this network to test the exact theory underlying standard HyperTransfer and Inverse HyperTransfer.

The second is the non-scale-invariant network. It retains only the Q/K normalization, matching the original NanoGPT configuration. For this network, the main transfer experiments use HyperTransfer+ and Inverse HyperTransfer+, because standard HyperTransfer no longer satisfies the exact gradient-scaling identity required by the strict theory.

\begin{figure}[htbp]
    \centering
    \includegraphics[width=1\linewidth]{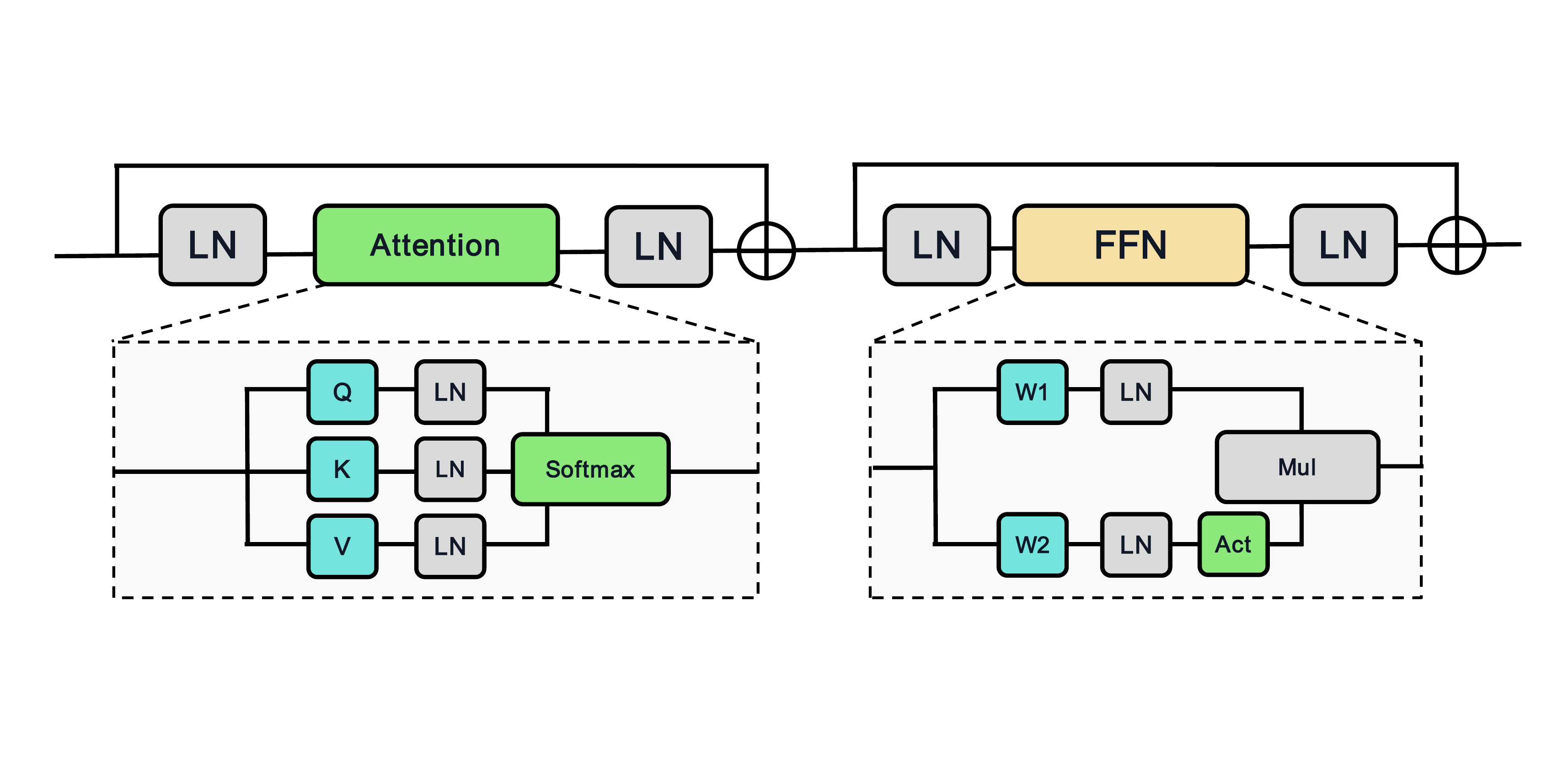}
    \caption{Architecture of the strictly scale-invariant variant. RMSNorm (denoted by LN) is applied before each attention and FFN sublayer and after the selected linear transformations within the attention and FFN modules, ensuring invariance of the corresponding weight matrices to positive rescaling. The non-scale-invariant variant follows the original NanoGPT configuration and retains only the block-level pre-norm and Q/K normalization, removing the  post-linear normalization.}
    \label{arch}
\end{figure}
\subsection{Experiment 1: Validation on Scale-Invariant Network}

{We use the scale-invariant network described above and evaluate a linear learning-rate schedule that decays to zero. We first independently tune the Muon and MuonH baselines. We then use HyperTransfer to make MuonH reproduce the Muon baseline dynamics, and Inverse HyperTransfer to make Muon reproduce the MuonH baseline dynamics. Thus, the experiment tests whether MuonH can reproduce the dynamics of Muon and whether Muon can reproduce the dynamics of MuonH.

Under this schedule, the selected Muon baseline uses peak learning rates \(\eta_{\mathrm{Muon}}=0.01\) and \(\eta_{\mathrm{AdamW}}=0.01\), and achieves a final validation loss of \(3.213\). The selected MuonH baseline uses \(\eta_{\mathrm{MuonH}}=0.015\) and \(\eta_{\mathrm{AdamW}}=0.01\), and achieves a final validation loss of \(3.210\). HyperTransfer, in which MuonH reproduces the Muon dynamics, achieves a final validation loss of \(3.213\). Inverse HyperTransfer, in which Muon reproduces the MuonH dynamics, achieves a final validation loss of \(3.210\). Using unrounded values, the final validation-loss gaps relative to the corresponding targets are on the order of \(10^{-4}\) or smaller.

For each transfer run, we record the proxy norm \(s_t\), the norm of the corresponding Base Optimizer parameter \(\|w_t\|\), the transferred effective learning rate, and the effective learning rate measured from the target Muon run for each hidden matrix. These diagnostics are shown in Figure~\ref{fig:internal-alignment-main} and in the parameter-wise plots in Figures~\ref{fig:scaleinv-norm0}--\ref{fig:scaleinv-raw-lr0}. 
}

\subsection{Experiment 2: Validation on Non-Scale-Invariant Network}

{
This experiment evaluates the non-scale-invariant network described above. In this network, only the Q/K hidden matrices retain the strict scale-invariant structure, while the V output, attention projection, MLP \texttt{fc}, and MLP \texttt{proj} matrices are scale-sensitive. Therefore, the main transfer runs use HyperTransfer+ and Inverse HyperTransfer+, rather than the standard HyperTransfer rules. 

For the schedule decaying to zero, the selected Muon baseline uses \(\eta_{\mathrm{Muon}}=0.015\) and \(\eta_{\mathrm{AdamW}}=0.015\), reaching final validation loss \(3.217\). The selected MuonH baseline uses \(\eta_{\mathrm{MuonH}}=0.015\) and \(\eta_{\mathrm{AdamW}}=0.01\), reaching final validation loss \(3.212\). HyperTransfer+ in the \(\mathrm{MuonH}\rightarrow\mathrm{Muon}\) direction reaches final loss \(3.217\), matching the Muon target within \(5\times10^{-6}\). Inverse HyperTransfer+ in the \(\mathrm{Muon}\rightarrow\mathrm{MuonH}\) direction reaches final loss \(3.213\), within \(3\times10^{-4}\) of the MuonH target.

For the schedule decaying to \(0.1\times\) the peak learning rate, the selected Muon baseline uses \(\eta_{\mathrm{Muon}}=0.0044\) and \(\eta_{\mathrm{AdamW}}=0.0044\), reaching final validation loss \(3.238\). The selected MuonH baseline uses \(\eta_{\mathrm{MuonH}}=0.0125\) and \(\eta_{\mathrm{AdamW}}=0.0064\), reaching final validation loss \(3.238\). HyperTransfer+ reaches final loss \(3.238\). Inverse HyperTransfer+ reaches final loss \(3.238\), within \(3.45\times10^{-4}\) of the MuonH target.

The first and second rows of Figure~\ref{fig:hypertransfer-plus-grid} report these two settings. The parameter-wise diagnostics for the schedule decaying to zero are shown in Figures~\ref{fig:data6-norm0-style}--\ref{fig:data6-Lr0-style}. The diagnostics for the \(0.1\times\) schedule are shown in Figures~\ref{fig:data8-norm0-style}--\ref{fig:data8-Lr0-style}. These diagnostics use the same recorded quantities as in the scale-invariant experiment, but with the HyperTransfer+ gradient-evaluation rule.
}

\subsection{Experiment 3: Effective Learning Rate Alignment}

{
This experiment studies a learning-rate schedule that decays to \(0.1\times\) the peak learning rate does not necessarily produce a final-to-peak effective-learning-rate ratio of \(0.1\). We use the Muon baselines under this schedule and compute, for each hidden weight matrix, the ratio between the final effective learning rate and the peak effective learning rate. The effective learning rate is measured as
\(
    \eta^{\mathrm{eff}}_t=\eta_t\frac{\|u_t\|}{(1-\eta_t\lambda)\|w_t\|}.
\)
Although the nominal schedule ends at \(0.1\times\) the peak learning rate, the measured effective learning-rate ratios are much smaller. Averaged over the hidden matrices, the final-to-peak ratio is \(4.7\times10^{-2}\) on the non-scale-invariant network. The corresponding parameter-wise heatmaps are shown in Figures~\ref{heat2}.

For the non-scale-invariant network, we set the MuonH's schedule to decay to \(4.7\times10^{-2}\) of the peak learning rate. The best run uses \(\eta_{\mathrm{MuonH}}=0.015\) and \(\eta_{\mathrm{AdamW}}=0.008\).

Finally, we transfer these effective-learning-rate-matched MuonH runs back to Muon. Inverse HyperTransfer+ reaches final loss \(3.221\), within \(3\times10^{-5}\) of the matched MuonH reference. These results, shown in Figure~\ref{fig:effective-lr-matched-transfer}, indicate that matching the nominal learning-rate schedule is not sufficient for a fair comparison: the induced effective learning-rate schedule may be a central part of the optimizer dynamics \citep{yang2026nonlinearitylearningratescaling}.
}

\clearpage
\section{Parameter-wise Results}
\subsection{Parameter-wise Results for Scale-Invariant Networks}
Figures~\ref{fig:scaleinv-norm0}--\ref{fig:scaleinv-raw-lr0} show the proxy norms, effective learning rates and raw learning-rate schedules for the scale-invariant network under a schedule decaying to zero.
\begin{figure}[H]
    \centering
    \includegraphics[width=.9\linewidth]{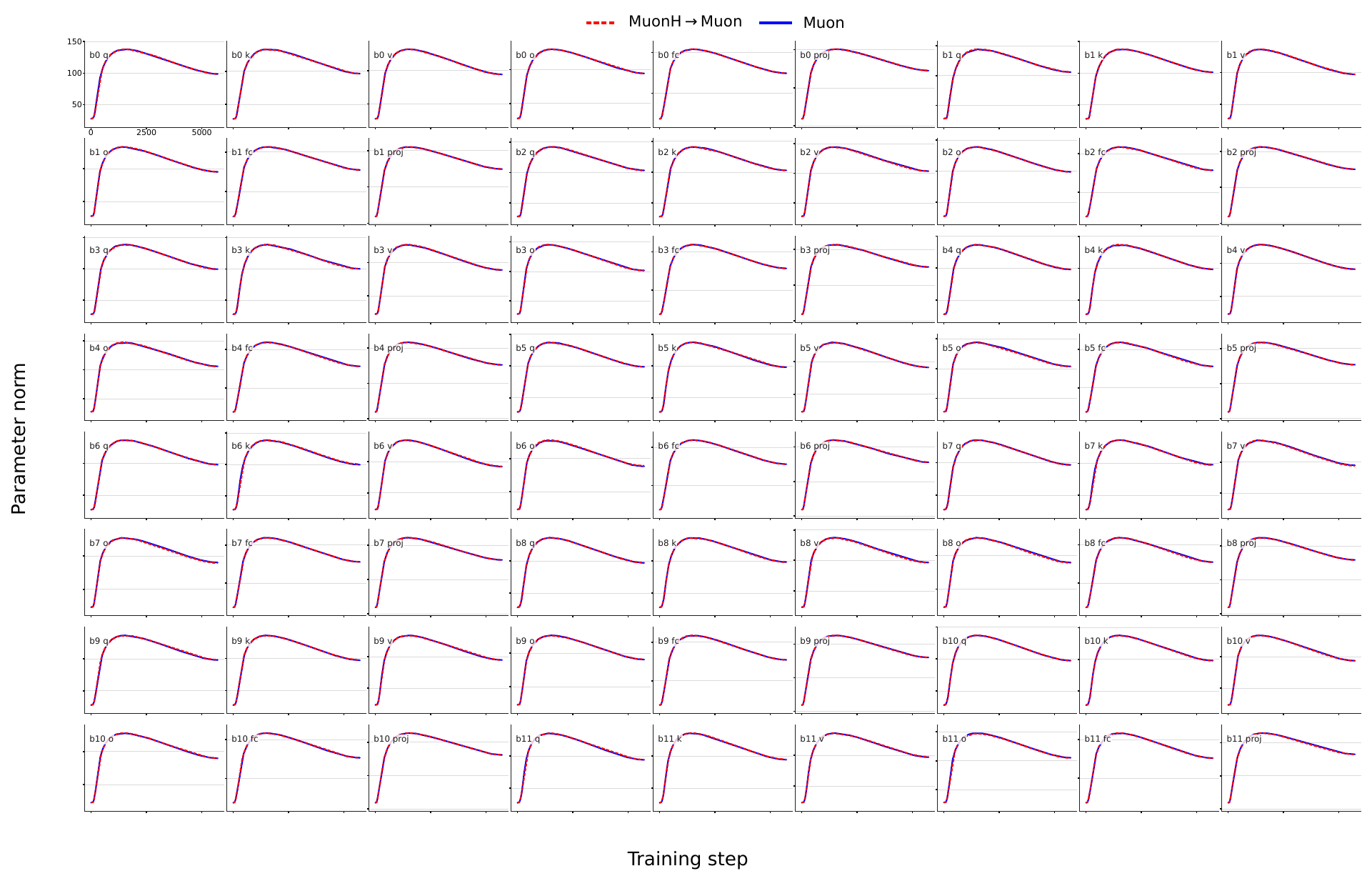}
    \caption{Parameter-wise proxy norm tracking under HyperTransfer on the scale-invariant network with a learning-rate schedule decaying to zero. Each panel corresponds to one hidden matrix. The red dashed curve shows the proxy norm \(s_t\) maintained by the \(\mathrm{MuonH}\rightarrow\mathrm{Muon}\) transfer run, while the blue solid curve shows the actual parameter norm \(\lVert w_t\rVert\) measured along the target Muon trajectory. Their close agreement demonstrates that \(s_t\) accurately recovers the parameter-norm evolution of the target optimizer.}
    \label{fig:scaleinv-norm0}
\end{figure}
\begin{figure}
    \centering
    \includegraphics[width=1\linewidth]{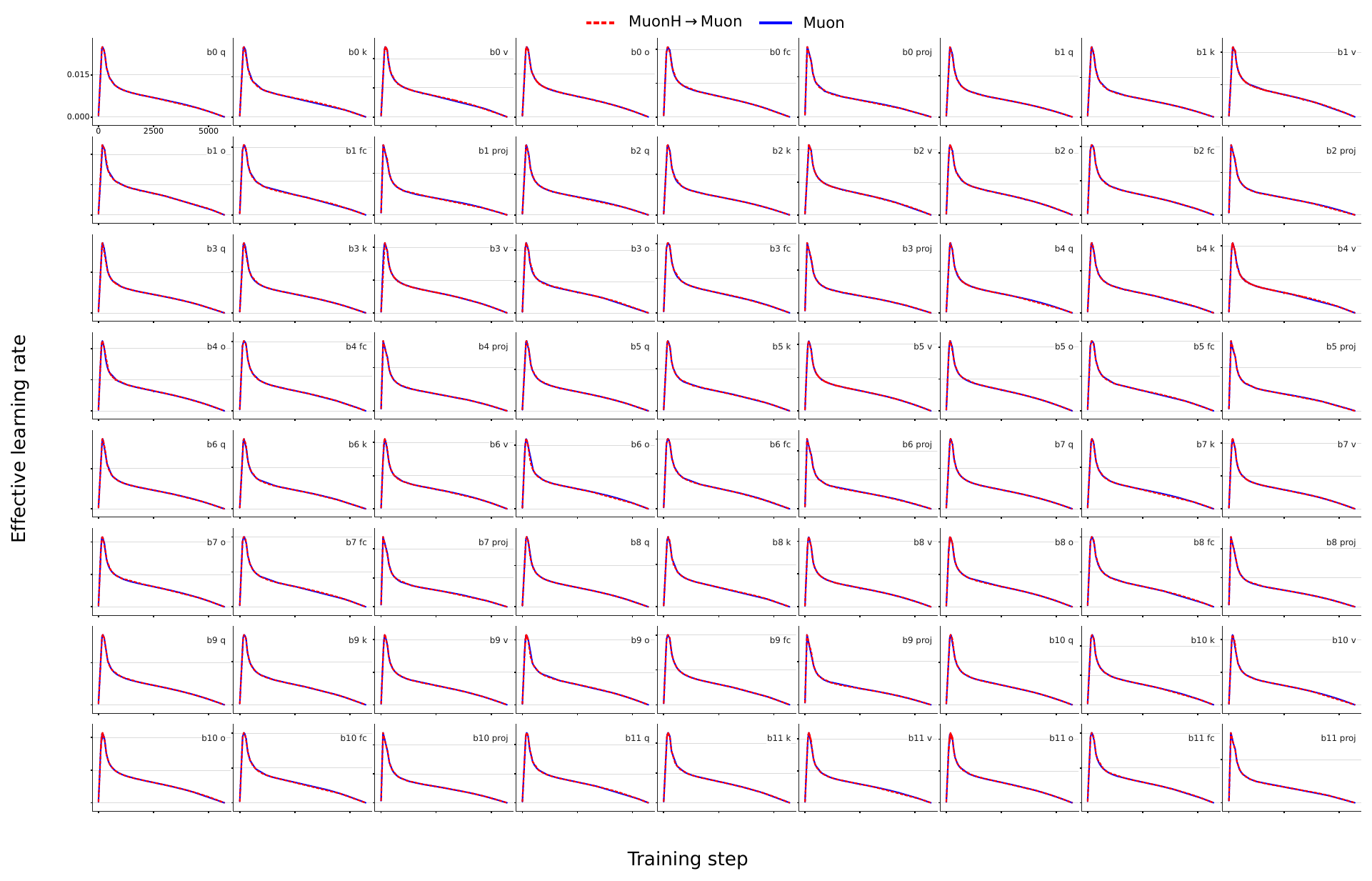}
    \caption{Parameter-wise effective-learning-rate tracking for the scale-invariant network under a schedule decaying to zero. Each small panel corresponds to one hidden matrix. The red dashed curves show the HyperTransfer estimates, while the blue curves show the corresponding Muon references.}
    \label{fig:scaleinv-eff-lr0}
\end{figure}
\begin{figure}
    \centering
    \includegraphics[width=1\linewidth]{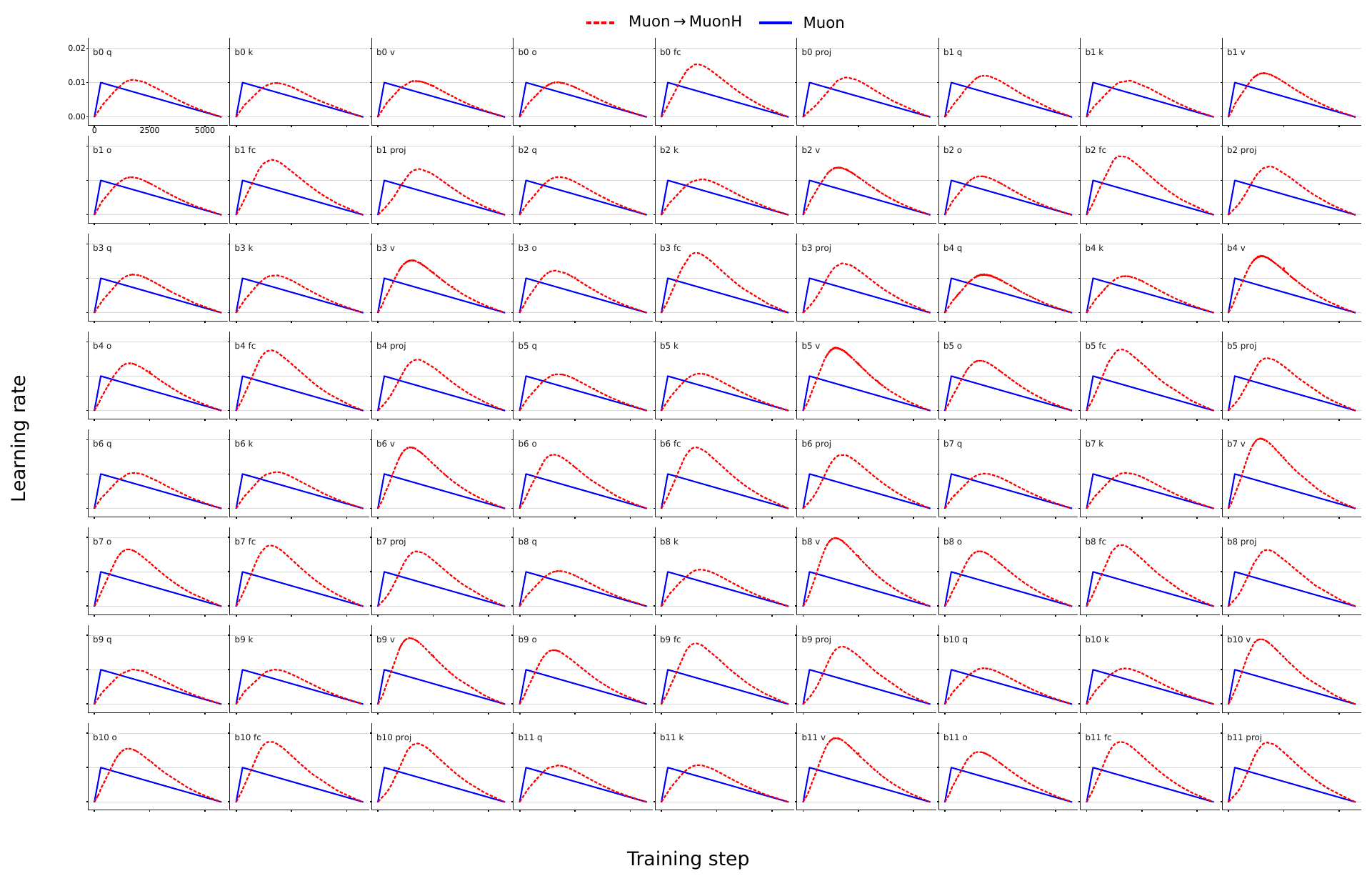}
    \caption{Parameter-wise nominal learning-rate schedules of Muon induced by Inverse HyperTransfer on the scale-invariant network, using a reference Hyperball schedule that decays to zero. Each panel corresponds to one hidden matrix. The red dashed curve shows the nominal learning rate \(\eta_t\) assigned to Muon, while the blue solid curve shows the nominal learning-rate schedule of the independently tuned Muon reference. }
    \label{fig:scaleinv-raw-lr0}
\end{figure}
\clearpage

\subsection{Parameter-wise Results on Non-Scale-Invariant Networks}
\subsubsection{Results with the learning rate decayed to zero}
Figures~\ref{fig:data6-norm0-style}--\ref{fig:data6-Lr0-style} present the
corresponding parameter-wise results for the non-scale-invariant network
$\mathcal{M}_2$ under a learning-rate schedule decaying to zero.
\begin{figure}[H]
    \centering
    \includegraphics[width=1\linewidth]{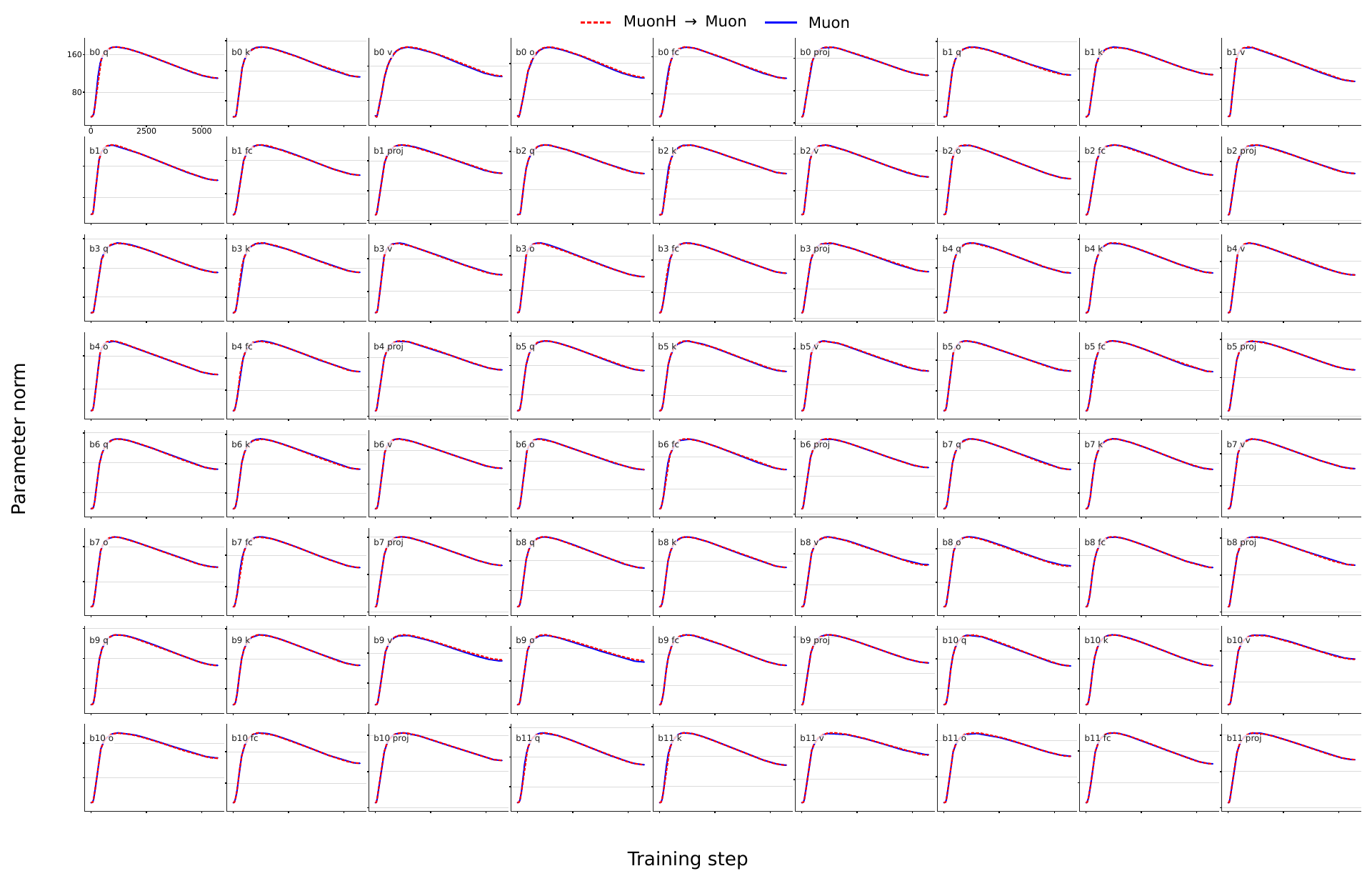}
    \caption{Parameter-wise proxy norm tracking under HyperTransfer+ on the non-scale-invariant network with a learning-rate schedule decaying to zero. Each panel corresponds to one hidden matrix. The red dashed curve shows the proxy norm \(s_t\) maintained by the \(\mathrm{MuonH}\rightarrow\mathrm{Muon}\) transfer run, while the blue solid curve shows the actual parameter norm \(\lVert w_t\rVert\) measured along the target Muon trajectory. Their close agreement demonstrates that \(s_t\) accurately recovers the parameter-norm evolution of the target optimizer.}
    \label{fig:data6-norm0-style}
\end{figure}
\begin{figure}[p]
    \centering
    \includegraphics[width=1\linewidth]{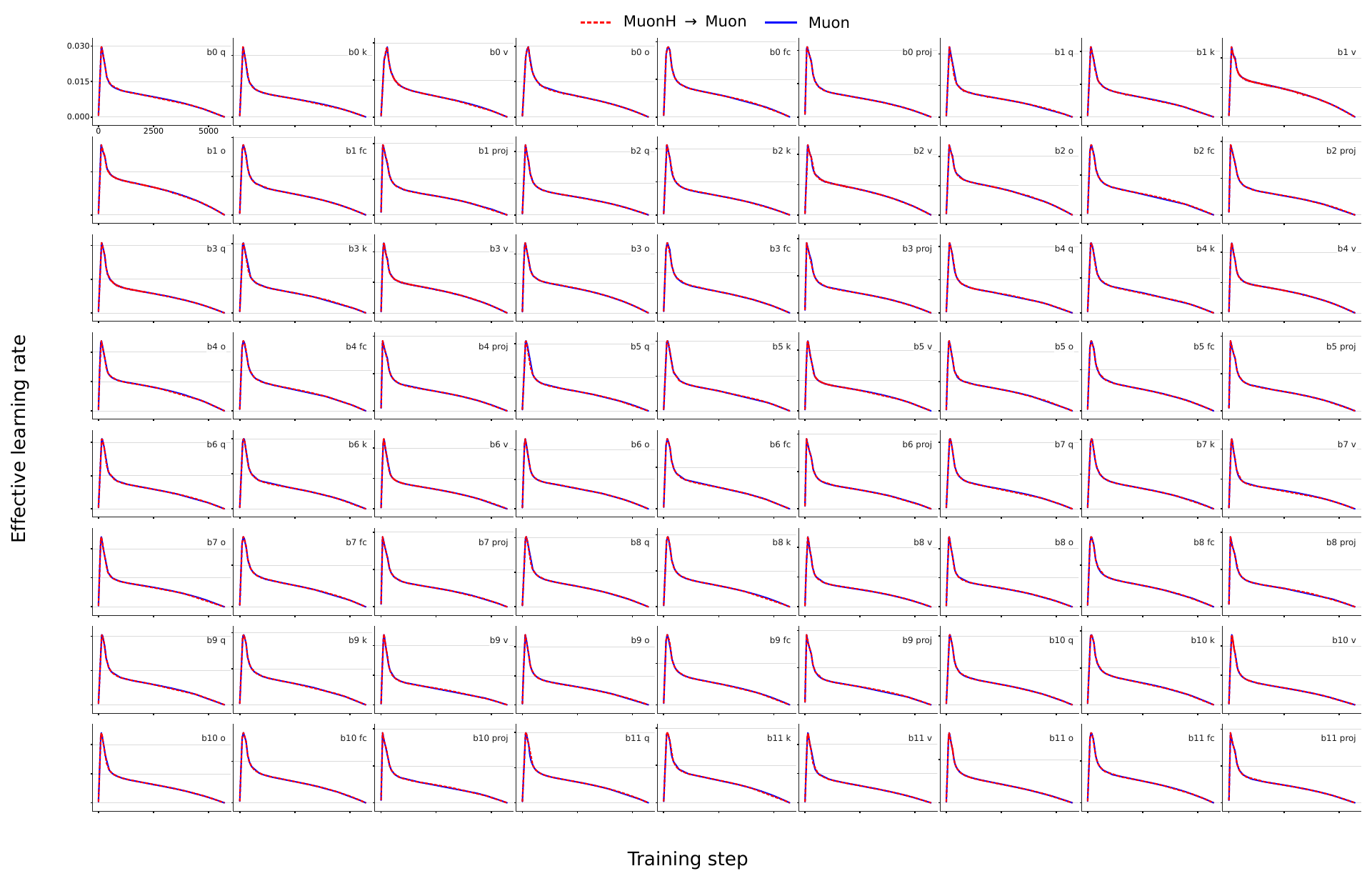}
    \caption{Parameter-wise effective-learning-rate tracking for the non-scale-invariant network under a schedule decaying to zero. Each small panel corresponds to one hidden matrix. The red dashed curves show the HyperTransfer+ estimates, while the blue curves show the corresponding Muon references.}
    \label{fig:data6-lreff0-style}
\end{figure}
\begin{figure}[p]
    \centering
    \includegraphics[width=1\linewidth]{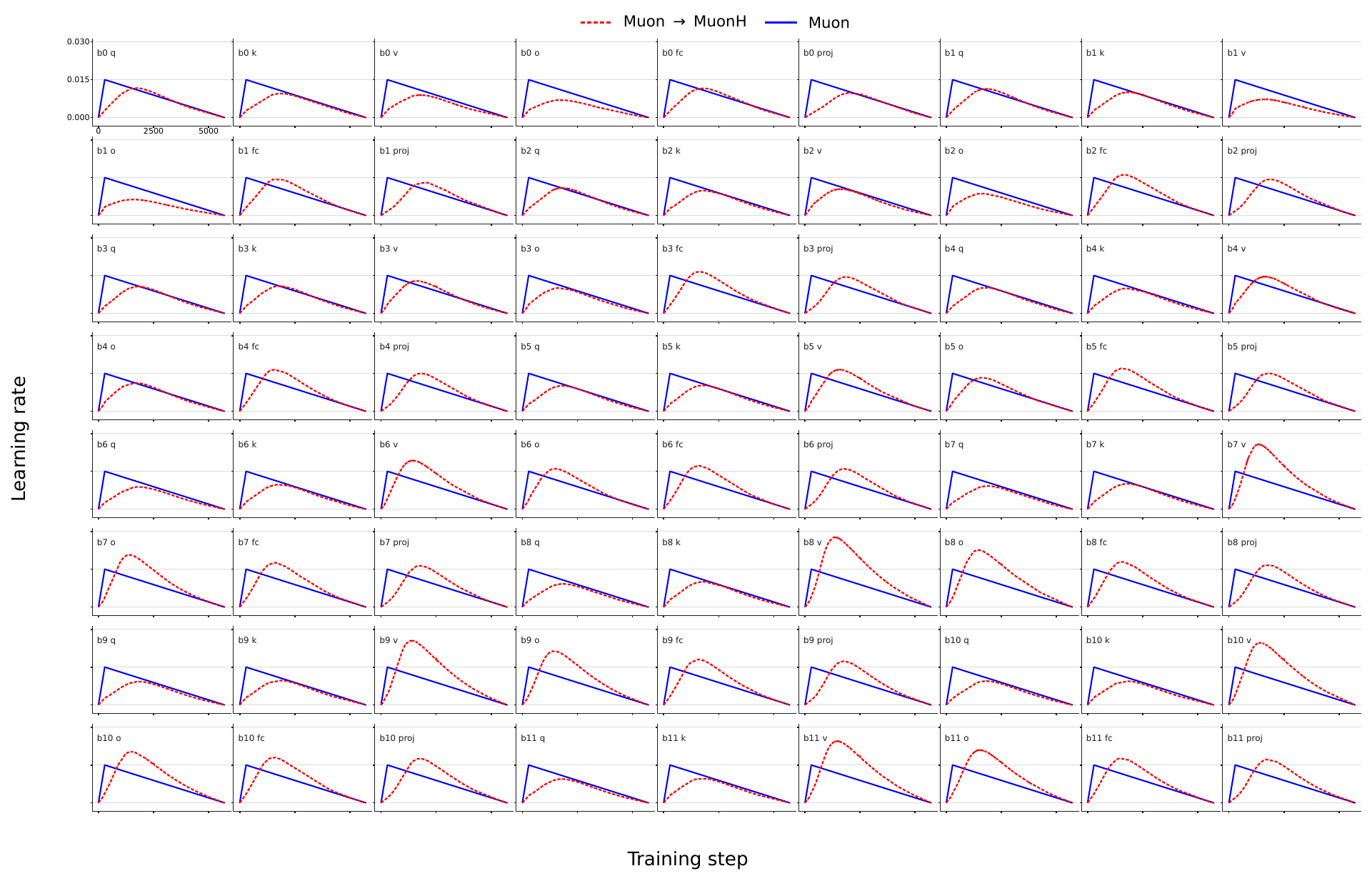}
    \caption{Parameter-wise nominal learning-rate schedules of Muon induced by Inverse HyperTransfer+ on the non-scale-invariant network, using a reference Hyperball schedule that decays to zero. Each panel corresponds to one hidden matrix. The red dashed curve shows the nominal learning rate \(\eta_t\) assigned to Muon, while the blue solid curve shows the nominal learning-rate schedule of the independently tuned Muon reference. }
    \label{fig:data6-Lr0-style}
\end{figure}
\clearpage

\subsubsection{Results with the Learning Rate Decayed to \(0.1\times\) Peak
}
Figures~\ref{fig:data8-norm0-style}--\ref{fig:data8-Lr0-style} present the corresponding parameter-wise results for $\mathcal{M}_2$ under a learning-rate schedule decaying to $0.1\times$ the peak learning rate.

We also observe that, after using Inverse HyperTransfer to align Muon's dynamics with those of MuonH, the resulting Muon schedule assigns different peak learning rates to different matrix parameters, and these schedules all contain a long warmup phase. This suggests that adjusting the learning-rate schedule separately for each parameter block may be a more appropriate choice than using a single global schedule.
\begin{figure}[H]
    \centering
    \includegraphics[width=1\linewidth]{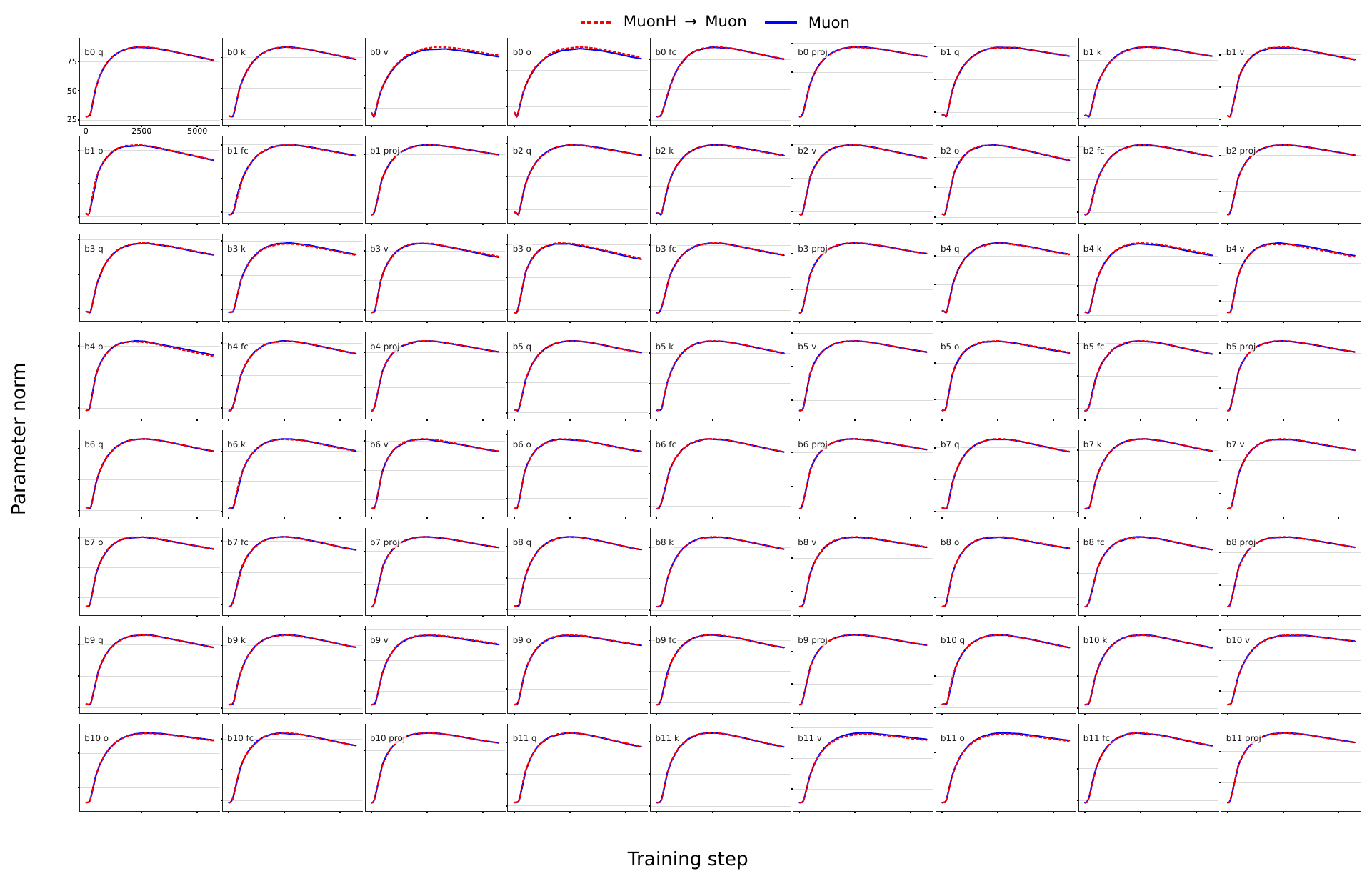}
    \caption{Parameter-wise proxy norm tracking under HyperTransfer+ on the non-scale-invariant network with a learning-rate schedule decaying to \(0.1\times\) the peak learning rate. Each panel corresponds to one hidden matrix. The red dashed curve shows the proxy norm \(s_t\) maintained by the \(\mathrm{MuonH}\rightarrow\mathrm{Muon}\) transfer run, while the blue solid curve shows the actual parameter norm \(\lVert w_t\rVert\) measured along the target Muon trajectory. }
    \label{fig:data8-norm0-style}
\end{figure}
\clearpage
\begin{figure}[p]
    \centering
    \includegraphics[width=1\linewidth]{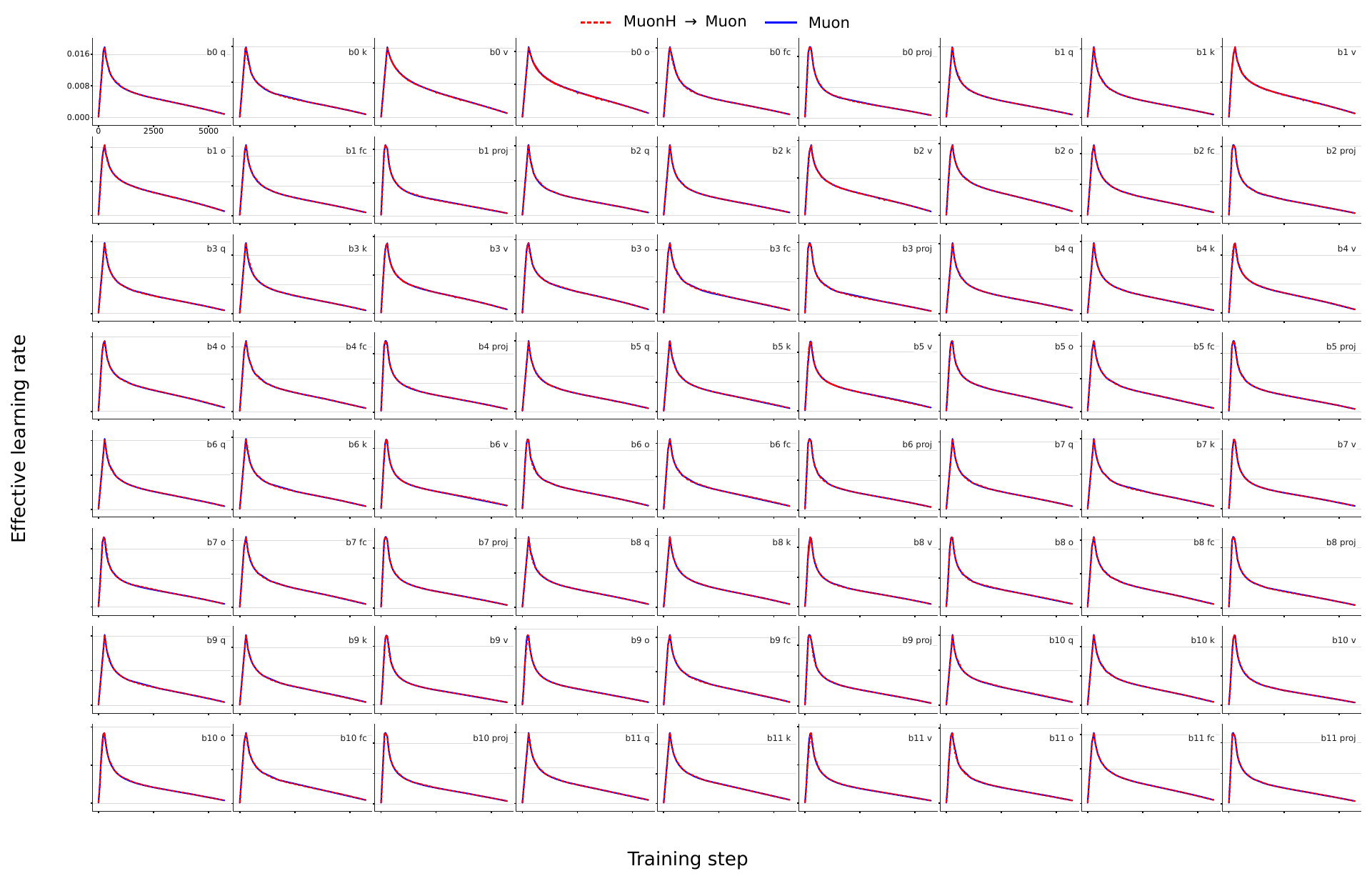}
    \caption{Parameter-wise effective-learning-rate tracking for the non-scale-invariant network under a schedule decaying to \(0.1\times\) the peak learning rate. Each small panel corresponds to one hidden matrix. The red dashed curves show the HyperTransfer+ estimates, while the blue curves show the corresponding Muon references.}
    \label{fig:data8-lreff0-style}
\end{figure}

\begin{figure}[p]
    \centering
    \includegraphics[width=1\linewidth]{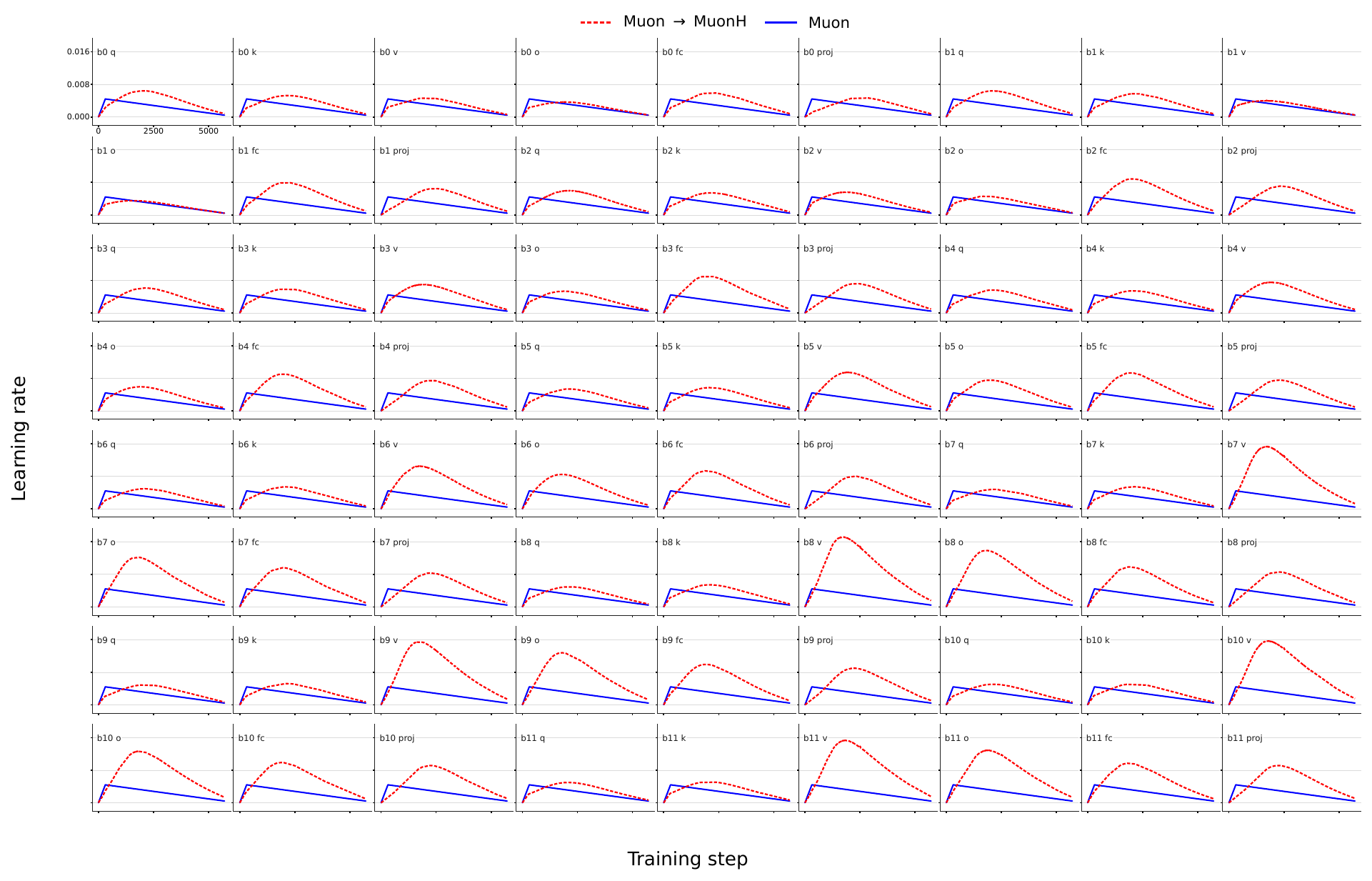}
    \caption{Parameter-wise nominal learning-rate schedules of Muon induced by Inverse HyperTransfer+ on the non-scale-invariant network, using a reference Hyperball schedule that decays to \(0.1\times\) peak learning rate. Each panel corresponds to one hidden matrix. The red dashed curve shows the nominal learning rate \(\eta_t\) assigned to Muon, while the blue solid curve shows the nominal learning-rate schedule of the independently tuned Muon reference. }
    \label{fig:data8-Lr0-style}
\end{figure}

\clearpage
\newpage

\section{Other Experimental Results}
\subsection{HyperTransfer for AdamW}
\label{secapp:hyper_adam}

We further evaluate HyperTransfer with AdamW and its Hyperball counterpart, AdamH, on the scale-invariant network \(\mathcal{M}_1\) under a linear learning-rate schedule decaying to zero. As shown in Figure~\ref{adam}, HyperTransfer enables AdamH to closely follow the target AdamW validation-loss trajectory, while Inverse HyperTransfer enables AdamW to reproduce the target AdamH dynamics. The agreement persists throughout training. 
\begin{figure}[htbp]
    \centering
    \includegraphics[width=1\linewidth]{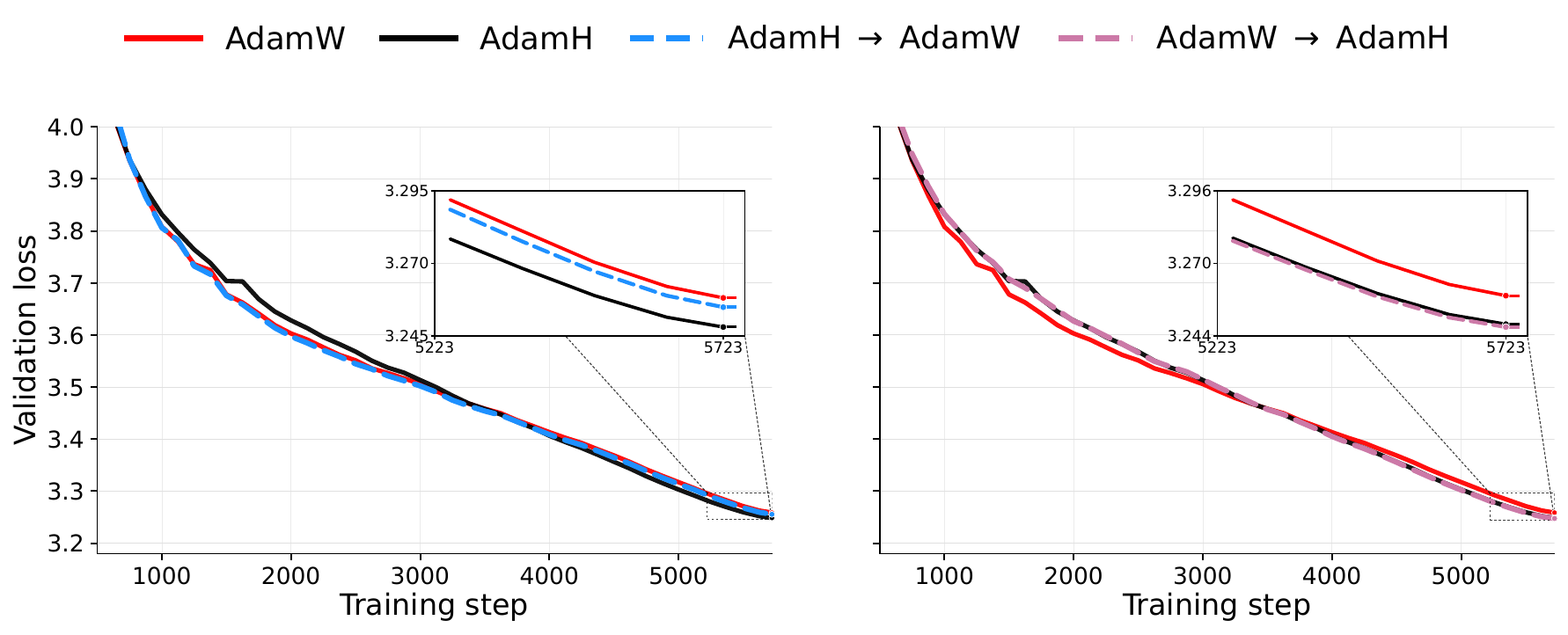}
    \caption{Bidirectional transfer between AdamW and AdamH on the scale-invariant network \(\mathcal{M}_1\) under a linear learning-rate schedule decaying to zero. The left panel shows HyperTransfer, where AdamH \(\rightarrow\) AdamW closely follows the target AdamW loss trajectory. The right panel shows Inverse HyperTransfer, where AdamW \(\rightarrow\) AdamH closely follows the target AdamH trajectory. }
    \label{adam}
\end{figure}
\subsection{HyperTransfer on dense language
model }
We further evaluate HyperTransfer and Inverse HyperTransfer on a
500M dense language model trained on 70B tokens. This experiment
provides an additional large-scale validation of the proposed bidirectional
transfer framework.

Figure~\ref{dense-forward} evaluates the HyperTransfer. The upper panel
shows the training-loss trajectories of the Muon Optimizer and the
transferred MuonH optimizer, while the lower panel shows the difference
between their training losses. The transferred trajectory closely follows the
target trajectory.

Figure~\ref{dense-inverse} evaluates the Inverse HyperTransfer. The upper panel
shows the training-loss trajectories of the target MuonH optimizer and the
Base Optimizer obtained through Inverse HyperTransfer, while the lower panel
shows the difference between their training losses. The close agreement in
both directions demonstrates that HyperTransfer and Inverse HyperTransfer remains effective in a
large-scale dense language-model setting.

\begin{figure}[htbp]
    \centering
    \includegraphics[width=1\linewidth]{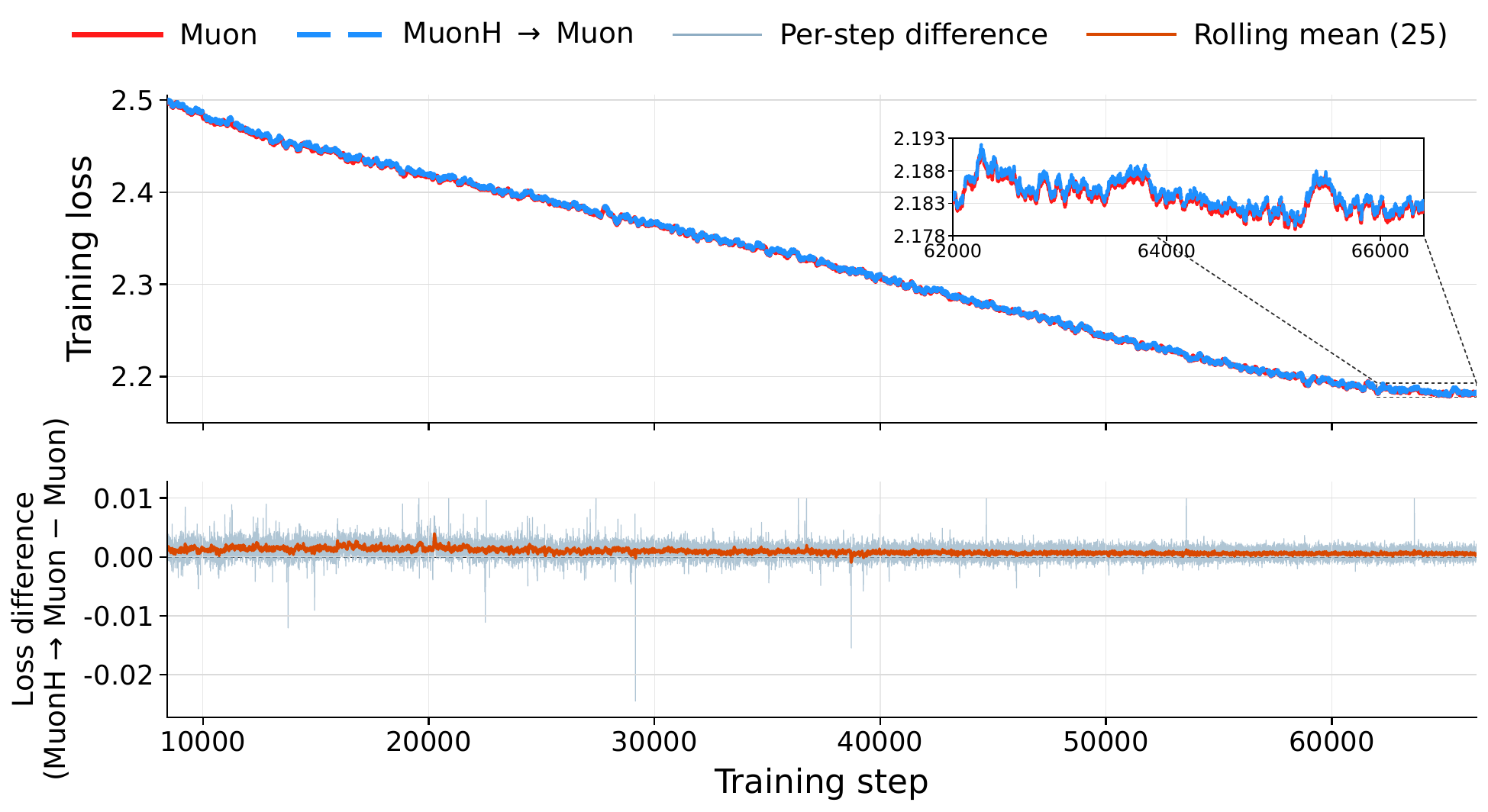}
    \caption{HyperTransfer on the 500M-parameter dense language model
    trained on 70B tokens. The upper panel compares the training-loss
    trajectories of the target Muon Optimizer and the transferred MuonH. The lower panel shows the difference between their training
    losses. The results demonstrate that HyperTransfer maintains its transfer capability even in long-duration training.}
    \label{dense-forward}
\end{figure}

\begin{figure}[htbp]
    \centering
    \includegraphics[width=1\linewidth]{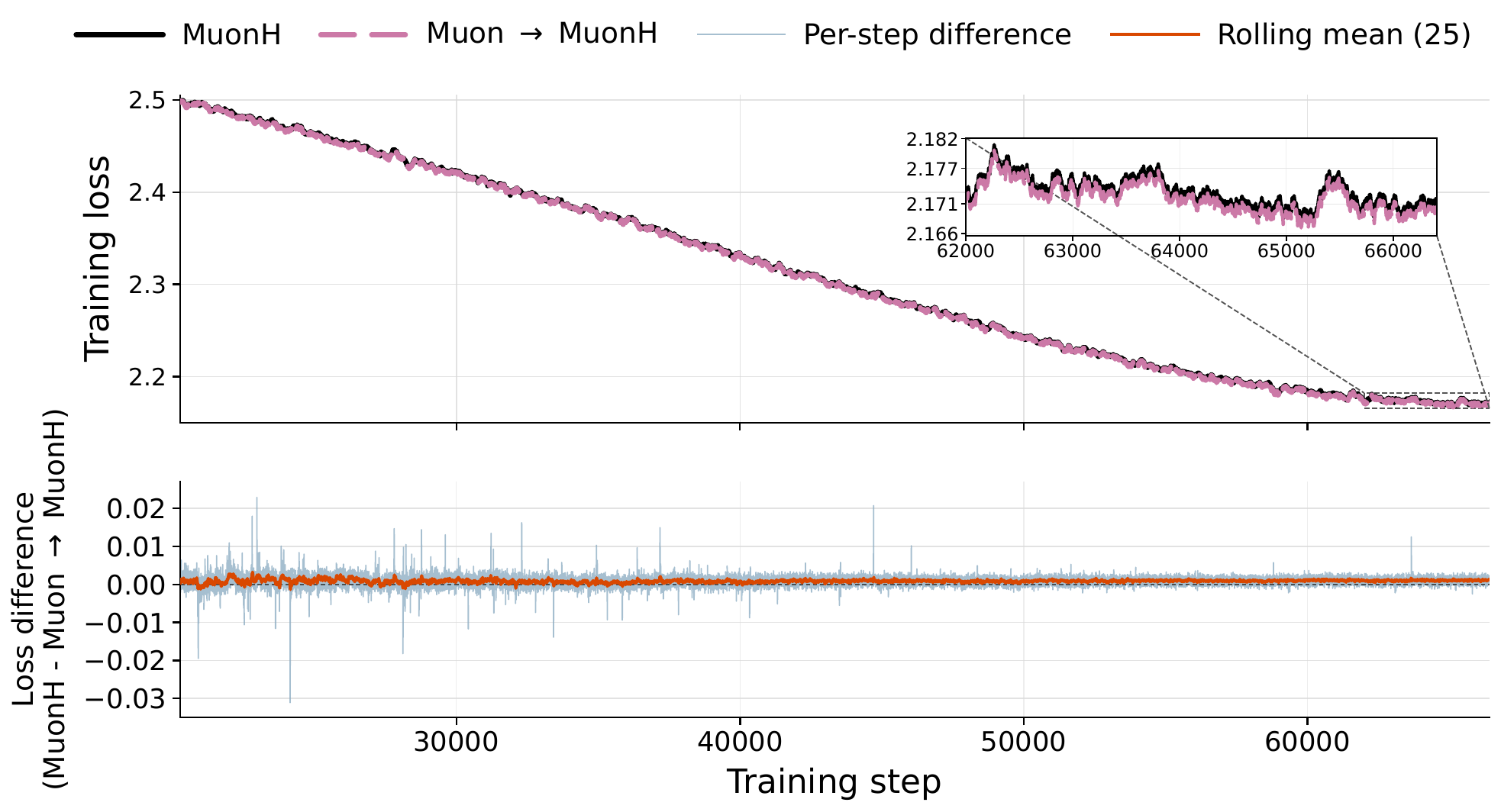}
    \caption{Inverse HyperTransfer on the 500M-parameter dense language model
    trained on 70B tokens. The upper panel compares the training-loss
    trajectories of the target MuonH optimizer and the Base Muon Optimizer
    obtained through Inverse HyperTransfer. The lower panel shows the
    difference between their training losses. The results demonstrate that Inverse HyperTransfer maintains its transfer capability even in long-duration training.}
    \label{dense-inverse}
\end{figure}

\section{Controlling the Effective Learning-Rate Schedule of Base Optimizers by Fair Learning Rate}
\label{app:fair-lr}

The same nominal learning-rate schedule need not produce the same effective
learning-rate schedule for different Base Optimizers. The reason is that the
relative update size depends not only on the nominal learning rate, but also
on the optimizer-dependent update norm and the evolving parameter norm. This
distinction is especially relevant when comparing with Hyperball, whose
fixed parameter norm and normalized update direction make its nominal
learning rate directly determine the relative update size. Our goal is
therefore to control the effective learning-rate schedule of a Base
Optimizer while allowing its parameter norm to evolve freely during
training. To this end, we adjust only the scalar learning rate at each
iteration, without introducing a fixed radius, radial projection, or
prescribed update magnitude. The Base Optimizer uses the update is
\(
    w_{t+1}=(1-\eta_t\lambda)w_t-\eta_tu_t,
\)
and its effective learning rate is defined as:
\begin{equation}
    \eta_t^{\mathrm{eff}}
    =
    \frac{\eta_t\|u_t\|}
    {(1-\eta_t\lambda)\|w_t\|}.
    \label{eq:fair-effective-learning-rate}
\end{equation}

Thus, the effective schedule depends not only on the nominal schedule
$\{\eta_t\}$, but also on the evolving ratio $\|u_t\|/\|w_t\|$. Since this
ratio is determined by the optimizer state and the training trajectory, a
nominal learning-rate schedule alone does not generally specify the
effective-learning-rate schedule in advance. This observation is closely
related to the motivation behind Inverse HyperTransfer, which also uses the
current parameter norm and update norm to transform between nominal and
effective learning rates.

Following the same inversion principle, we define the \textit{Fair Learning
Rate} as an online conversion from a prescribed effective-learning-rate
schedule to the nominal learning rate of a Base Optimizer. Let
$\{\eta_t^{\mathrm{eff}}\}$ denote the target effective-learning-rate
schedule. At each iteration, we set the nominal learning rate supplied to the
Base Optimizer to:

\begin{equation}
    \eta_t^{\mathrm{fair}}
    =
    \frac{\eta_t^{\mathrm{eff}}\|w_t\|}
    {\|u_t\|+\eta_t^{\mathrm{eff}}\lambda\|w_t\|}.
    \label{eq:fair-learning-rate}
\end{equation}
Assuming $\|w_t\|>0$ and $\|u_t\|>0$, substituting
$\eta_t=\eta_t^{\mathrm{fair}}$ into Equation~\eqref{eq:fair-effective-learning-rate}
gives the prescribed $\eta_t^{\mathrm{eff}}$ at every iteration. This is an
algebraic identity at each step and does not depend on the future evolution
of $\|u_t\|/\|w_t\|$. In particular, if the target schedule satisfies
$\eta_T^{\mathrm{eff}}/\max_t\eta_t^{\mathrm{eff}}=r$, then the realized
effective schedule has the same ratio by construction.

This construction controls the effective learning-rate schedule without
introducing the fixed radius $R$ or prescribing the update magnitude used by
Hyperball. The parameter norm is updated by the original Base Optimizer:
\begin{equation}
    \|w_{t+1}\|=\|(1-\eta_t^{\mathrm{fair}}\lambda)w_t-
    \eta_t^{\mathrm{fair}}u_t\|,
\end{equation}
and is therefore free to evolve during
training. Consequently, this construction provides a controlled comparison
with respect to effective learning rate, while making no claim that
optimizers with different update directions or internal states are otherwise
equivalent. Any remaining differences in optimization behavior arise from
the update direction, optimizer state, and norm trajectory rather than from
an uncontrolled effective-learning-rate schedule.

When $\lambda=0$, Equation~\eqref{eq:fair-learning-rate} reduces to
$\eta_t^{\mathrm{fair}}=\eta_t^{\mathrm{eff}}\|w_t\|/\|u_t\|$.
For $u_t$ equal to the gradient, this is the same norm-ratio scaling used by
LARS~\citep{you2017largebatchtrainingconvolutional}.
\section{Understanding Weight Decay via HyperTransfer}
\label{app:weight-decay-hypertransfer}

Hyperball keeps the parameter norm fixed by a radial projection. Although it
does not apply an explicit weight-decay term, this projection introduces a
step-dependent radial rescaling. Let $\bar{w}_{t+1}^H$ denote the Hyperball
iterate before projection. Then:
\begin{equation}
    \bar{w}_{t+1}^H
    =
    w_t^H-\eta_t^H R\frac{u_t^H}{\|u_t^H\|},
    \qquad \|w_t^H\|=R.
    \label{eq:wd-hyperball-preprojection}
\end{equation}

The projected iterate is:
\begin{equation}
    w_{t+1}^H
    =
    \frac{R}{\|\bar{w}_{t+1}^H\|}\bar{w}_{t+1}^H
    =
    \frac{R}{\|\bar{w}_{t+1}^H\|}w_t^H
    -
    \eta_t^H\frac{R^2}{\|\bar{w}_{t+1}^H\|}
    \frac{u_t^H}{\|u_t^H\|}.
    \label{eq:wd-hyperball-form}
\end{equation}

The first term has the same form as the factor
$1-\eta_t\lambda$ in a Base Optimizer. More precisely, the radial factor is
$R/\|\bar{w}_{t+1}^H\|$, which corresponds to the step-dependent weight-decay
coefficient $(1-R/\|\bar{w}_{t+1}^H\|)/\eta_t^H$. Thus, Hyperball can be
viewed as using an implicit weight decay together with a rescaled update.

To compare two Base Optimizers with different weight-decay coefficients,
let their updates be:
\begin{equation}
    w_{t+1}^{(1)}
    =
    (1-\eta_t^{(1)}\lambda_1)w_t^{(1)}
    -\eta_t^{(1)}u_t^{(1)},
    \qquad
    s_t^{(1)}=\|w_t^{(1)}\|.
    \label{eq:wd-base-one}
\end{equation}

The second optimizer uses weight-decay coefficient $\lambda_2$ and updates
according to:
\begin{equation}
    w_{t+1}^{(2)}
    =
    (1-\eta_t^{(2)}\lambda_2)w_t^{(2)}
    -\eta_t^{(2)}u_t^{(2)},
    \qquad
    s_t^{(2)}=\|w_t^{(2)}\|.
    \label{eq:wd-base-two}
\end{equation}

Assume that the two parameters lie on the same ray at iteration $t$. For a
scale-invariant network, their gradients differ only by the scale of the
parameters. Therefore, the gradient entering the second optimizer can be
rescaled before the momentum or adaptive-state recursion as follows:
\begin{equation}
    \widetilde{g}_t^{(2)}
    =
    \frac{s_t^{(2)}}{s_t^{(1)}}g_t^{(2)}
    =
    g_t^{(1)}.
    \label{eq:wd-gradient-accumulation}
\end{equation}

With identical optimizer-state recursions and initial states, this gradient
accumulation makes the two update directions identical, so we write
$u_t^{(1)}=u_t^{(2)}=u_t$. The two optimizers then have the same angular
update when their effective learning rates agree. The required condition is:
\begin{equation}
    \eta_t^{\mathrm{eff},(1)}
    =
    \frac{\eta_t^{(1)}\|u_t\|}
    {(1-\eta_t^{(1)}\lambda_1)s_t^{(1)}}
    =
    \frac{\eta_t^{(2)}\|u_t\|}
    {(1-\eta_t^{(2)}\lambda_2)s_t^{(2)}}
    =
    \eta_t^{\mathrm{eff},(2)}.
    \label{eq:wd-effective-alignment}
\end{equation}

If the first optimizer is taken as the target, the nominal learning rate for
the second optimizer is therefore chosen as:
\begin{equation}
    \eta_t^{(2)}
    =
    \frac{\eta_t^{(1)}s_t^{(2)}}
    {(1-\eta_t^{(1)}\lambda_1)s_t^{(1)}
    +\eta_t^{(1)}\lambda_2s_t^{(2)}}.
    \label{eq:wd-transferred-lr}
\end{equation}

The proxy norms must be updated separately because the two optimizers use
different weight-decay coefficients. For the optimizer with coefficient
$\lambda_1$, the update is:
\begin{equation}
    \begin{aligned}
    \left(s_{t+1}^{(1)}\right)^2
    ={}
    \left(1-\eta_t^{(1)}\lambda_1\right)^2
    \left(s_t^{(1)}\right)^2
    +\left(\eta_t^{(1)}\right)^2\|u_t\|^2 
    -2\eta_t^{(1)}
    \left(1-\eta_t^{(1)}\lambda_1\right)
    \frac{s_t^{(1)}}{s_t^{(2)}}\left\langle u_t,w_t^{(2)}\right\rangle .
    \end{aligned}
\end{equation}

For the optimizer with coefficient $\lambda_2$, it is:
\begin{equation}
    \begin{aligned}
    \left(s_{t+1}^{(2)}\right)^2
    ={}
    \left(1-\eta_t^{(2)}\lambda_2\right)^2
    \left(s_t^{(2)}\right)^2
    +\left(\eta_t^{(2)}\right)^2\|u_t\|^2 -2\eta_t^{(2)}
    \left(1-\eta_t^{(2)}\lambda_2\right)
    \frac{s_t^{(2)}}{s_t^{(1)}}\left\langle u_t,w_t^{(1)}\right\rangle .
    \end{aligned}
\end{equation}

Each recursion is exactly the squared norm of the Base
Optimizer update. Hence, if the two optimizers start on the same ray, the
gradient accumulations synchronize their update directions, and
Equation~\eqref{eq:wd-effective-alignment} matches their effective angular
learning rates, the two parameter trajectories remain on the same ray. On a
scale-invariant network, this implies:\(
    f\left(w_t^{(1)}\right)
    =
    f\left(w_t^{(2)}\right)
\) for every $t$.

Therefore, two optimizers with different weight-decay coefficients can in
principle follow the same loss trajectory when their gradient accumulations
and effective learning rates are aligned. For a non-scale-invariant network,
the same construction requires evaluating the gradients at the corresponding
scale representatives, and loss equality is not implied without an additional
assumption.

\section{Limitations and Future Work}

Several limitations remain. For non-scale-invariant networks, HyperTransfer+ and Inverse HyperTransfer+ do not generally imply \(f(w_t)=f(w_t^H)\), because the fixed-radius Hyperball parameter is evaluated at a different scale from the Base parameter. For HyperTransfer+, the maintained relation \(w_t=(s_t/R)w_t^H\) gives \(f(w_t)=f((s_t/R)w_t^H)\). The theoretical correspondence also requires \(w_t\) and \(w_t^H\) to remain exactly aligned in direction. In finite-precision computation, accumulated numerical errors can introduce a small angular deviation and a corresponding discrepancy between the loss values. Finally, the proxy norm \(s_t\) is currently determined by the fixed analytical update in Equation~\eqref{up}. One direction for future work is to make \(s_t\) learnable. More generally, the proxy-norm schedule and the nominal learning-rate schedule could be decoupled and adjusted jointly, providing two independently tunable controls for shaping a nonlinear effective-learning-rate schedule.
\end{document}